\newtheorem{theorem}{Theorem}
\newtheorem{definition}{Definition}
\patchcmd{\@maketitle}{\LARGE \@title}{\fontsize{16}{19.2}\selectfont\@title}{}{}
\title{Query-oriented text summarization based on hypergraph transversals}
\author{Hadrien Van Lierde and Tommy W. S. Chow\\
	Department of Electronic Engineering, City University of Hong Kong\\
	83 Tat Chee Av., Kowloon Tong, Hong Kong, China\\
	hadrien.vanlierde@hotmail.com, eetchow@cityu.edu.hk\\
	}
\date{}
\begin{document}
\footnotesize\noindent\textit{This is the unrefereed Author's Original Version (or pre-print Version) of the article. The present version is not the Accepted Manuscript.}
{\let\newpage\relax\maketitle}
\maketitle

\begin{abstract}
Existing graph- and hypergraph-based algorithms for document summarization represent the sentences of a corpus as the nodes of a graph or a hypergraph in which the edges represent relationships of lexical similarities between sentences. Each sentence of the corpus is then scored individually, using popular node ranking algorithms, and a summary is produced by extracting highly scored sentences. This approach fails to select a subset of \textit{jointly} relevant sentences and it may produce redundant summaries that are missing important topics of the corpus. To alleviate this issue, a new hypergraph-based summarizer is proposed in this paper, in which each node is a sentence and each hyperedge is a theme, namely a group of sentences sharing a topic. Themes are weighted in terms of their prominence in the corpus and their relevance to a user-defined query. It is further shown that the problem of identifying a subset of sentences covering the relevant themes of the corpus is equivalent to that of finding a hypergraph transversal in our theme-based hypergraph. Two extensions of the notion of hypergraph transversal are proposed for the purpose of summarization, and polynomial time algorithms building on the theory of submodular functions are proposed for solving the associated discrete optimization problems. The worst-case time complexity of the proposed algorithms is squared in the number of terms, which makes it cheaper than the existing hypergraph-based methods. A thorough comparative analysis with related models on DUC benchmark datasets demonstrates the effectiveness of our approach, which outperforms existing graph- or hypergraph-based methods by at least $6\%$ of ROUGE-SU4 score.\\

\noindent\textbf{keywords:} Query-Oriented Text Summarization, Hypergraph Theory, Hypergraph Transversal, Sentence Clustering, Submodular Set Functions
\end{abstract}

\section{Introduction}\label{introSection}
The development of automatic tools for the summarization of large corpora of documents has attracted a widespread interest in recent years. With fields of application ranging from medical sciences to finance and legal science, these summarization systems considerably reduce the time required for knowledge acquisition and decision making, by identifying and formatting the relevant information from a collection of documents. Since most applications involve large corpora rather than single documents, summarization systems developed recently are meant to produce summaries of multiple documents. Similarly, the interest has shifted from generic towards query-oriented summarization, in which a query expresses the user's needs. Moreover, existing summarizers are generally extractive, namely they produce summaries by extracting relevant sentences from the original corpus.

Among the existing extractive approaches for text summarization, graph-based methods are considered very effective due to their ability to capture the global patterns of connection between the sentences of the corpus. These systems generally define a graph in which the nodes are the sentences and the edges denote relationships of lexical similarities between the sentences. The sentences are then scored using graph ranking algorithms such as the PageRank \cite{lexrank} or HITS \cite{hits} algorithms, which can also be adapted for the purpose of query-oriented summarization \cite{R17}. A key step of graph-based summarizers is the way the graph is constructed, since it has a strong impact on the sentence scores. As pointed out in \cite{hypersum}, a critical issue of traditional graph-based summarizers is their inability to capture group relationships among sentences since each edge of a graph only connects a pair of nodes. 

Following the idea that each topic of a corpus connects a group of multiple sentences covering that topic, hypergraph models were proposed in \cite{hypersum} and \cite{herf}, in which the hyperedges represent similarity relationships among groups of sentences. These group relationships are formed by detecting clusters of lexically similar sentences we refer to as \textit{themes} or \textit{theme-based hyperedges}. Each theme is believed to cover a specific topic of the corpus. However, since the models of \cite{hypersum} and \cite{herf} define the themes as groups of lexically similar sentences, the underlying topics are not explicitly discovered. Moreover, their themes do not overlap which contradicts the fact that each sentence carries multiple information and may thus belong to multiple themes, as can be seen from the following example of sentence.

\begin{itemize}
\itemsep0em
\item[] "Once John finished studying for his school test the next day, he caught up with his friend at the sport centre and they played soccer together."
\end{itemize}

Two topics are covered by the sentence above: the topics of \textit{studies} and \textit{leisure}. Hence, the sentence should belong to multiple themes simultaneously, which is not allowed in existing hypergraph models of \cite{hypersum} and \cite{herf}. 

The hypergraph model proposed in this paper alleviates these issues by first extracting topics, i.e. groups of semantically related terms, using a new topic model referred to as \textit{SEMCOT}. Then, a theme is associated to each topic, such that each theme is defined a the group of sentences covering the associated topic. Finally, a hypergraph is formed with sentences as nodes, themes as hyperedges and hyperedge weights reflecting the prominence of each theme and its relevance to the query. In such a way, our model alleviates the weaknesses of existing hypergraph models since each theme-based hyperedge is associated to a specific topic and each sentence may belong to multiple themes. 

Furthermore, a common drawback of existing graph- and hypergraph-based summarizers is that they select sentences based on the computation of an individual relevance score for each sentence. This approach fails to capture the information jointly carried by the sentences which results in redundant summaries missing important topics of the corpus. To alleviate this issue, we propose a new approach of sentence selection using our theme-based hypergraph. A minimal hypergraph transversal is the smallest subset of nodes covering all hyperedges of a hypergraph \cite{gunopulous1997}. The concept of hypergraph transversal is used in computational biology \cite{klamt2009} and data mining \cite{gunopulous1997} for identifying a subset of relevant agents in a hypergraph. In the context of our theme-based hypergraph, a hypergraph transversal can be viewed as the smallest subset of sentences covering all themes of the corpus. We extend the notion of transversal to take the theme weights into account and we propose two extensions called \textit{minimal soft hypergraph transversal} and \textit{maximal budgeted hypergraph transversal}. The former corresponds to finding a subset of sentences of minimal aggregated length and achieving a \textit{target coverage} of the topics of the corpus (in a sense that will be clarified). The latter seeks a subset of sentences maximizing the total weight of covered hyperedges while not exceeding a \textit{target summary length}. As the associated discrete optimization problems are NP-hard, we propose two approximation algorithms building on the theory of submodular functions. Our transversal-based approach for sentence selection alleviates the drawback of methods of individual sentence scoring, since it selects a set of sentences that are jointly covering a maximal number of relevant themes and produces informative and non-redundant summaries. As demonstrated in the paper, the time complexity of the method is equivalent to that of early graph-based summarization systems such as LexRank \cite{lexrank}, which makes it more efficient than existing hypergraph-based summarizers \cite{hypersum,herf}. The scalability of summarization algorithms is essential, especially in applications involving large corpora such as the summarization of news reports \cite{hong2014repository} or the summarization of legal texts \cite{kanapala2017text}.

The method of \cite{takamura2009} proposes to select sentences by using a maximum coverage approach, which shares some similarities with our model. However, they attempt to select a subset of sentences maximizing the number of relevant terms covered by the sentences. Hence, they fail to capture the topical relationships among sentences, which are, in contrast, included in our theme-based hypergraph. 

A thorough comparative analysis with state-of-the-art summarization systems is included in the paper. Our model is shown to outperform other models on a benchmark dataset produced by the \textit{Document Understanding Conference}. The main contributions of this paper are (1) a new topic model extracting groups of semantically related terms based on patterns of term co-occurrences, (2) a natural hypergraph model representing nodes as sentences and each hyperedge as a theme, namely a group of sentences sharing a topic, and (3) a new sentence selection approach based on hypergraph transversals for the extraction of a subset of jointly relevant sentences.

The structure of the paper is as follows. In section \ref{relatedWorks}, we present work related to our method. In section \ref{overallSection}, we present an overview of our system which is described in further details in section \ref{mainSection}. Then, in section \ref{experimentSection}, we present experimental results. Finally, section \ref{concludingSection} presents a discussion and concluding remarks.

\section{Background and related work}\label{relatedWorks}
While early models focused on the task of single document summarization, recent systems generally produce summaries of corpora of documents \cite{mckeown2011}. Similarly, the focus has shifted from generic summarization to the more realistic task of query-oriented summarization, in which a summary is produced with the essential information contained in a corpus that is also relevant to a user-defined query \cite{nenkova2012}.

Summarization systems are further divided into two classes, namely abstractive and extractive models. Extractive summarizers identify relevant sentences in the original corpus and produce summaries by aggregating these sentences \cite{mckeown2011}. In contrast, an abstractive summarizer identifies conceptual information in the corpus and reformulates a summary from scratch \cite{nenkova2012}. Since abstractive approaches require advanced natural language processing, the majority of existing summarization systems consist of extractive models. 

Extractive summarizers differ in the method used to identify relevant sentences, which leads to a classification of models as either feature-based or graph-based approaches. Feature-based methods represent the sentences with a set of predefined features such as the sentence position, the sentence length or the presence of cue phrases \cite{fattah2014}. Then, they train a model to compute relevance scores for the sentences based on their features. Since feature-based approaches generally require datasets with labelled sentences which are hard to produce \cite{nenkova2012}, unsupervised graph-based methods have attracted growing interest in recent years. 

Graph-based summarizers represent the sentences of a corpus as the nodes of a graph with the edges modelling relationships of similarity between the sentences \cite{lexrank}. Then, graph-based algorithms are applied to identify relevant sentences. The models generally differ in the type of relationship captured by the graph or in the sentence selection approach. Most graph-based models define the edges connecting sentences based on the co-occurrence of terms in pairs of sentences \cite{lexrank,R17,hypersum}. Then, important sentences are identified either based on node ranking algorithms, or using a global optimization approach. Methods based on node ranking compute individual relevance scores for the sentences and build summaries with highly scored sentences. The earliest such summarizer, LexRank \cite{lexrank}, applies the PageRank algorithm to compute sentence scores. Introducing a query bias in the node ranking algorithm, this method can be adapted for query-oriented summarization as in \cite{R17}. A different graph model was proposed in \cite{zha2002}, where sentences and key phrases form the two classes of nodes of a bipartite graph. The sentences and the key phrases are then scored simultaneously by applying a mutual reinforcement algorithm. An extended bipartite graph ranking algorithm is also proposed in \cite{hits} in which the sentences represent one class of nodes and clusters of similar sentences represent the other class. The hubs and authorities algorithm is then applied to compute sentence scores. Adding terms as a third class of nodes, \cite{zhang2012} propose to score terms, sentences and sentence clusters simultaneously, based on a mutual reinforcement algorithm which propagates the scores across the three node classes. A common drawback of the approaches based on node ranking is that they compute individual relevance scores for the sentences and they fail to model the information jointly carried by the sentences, which may result in redundant summaries. Hence, global optimization approaches were proposed to select a set of jointly relevant and non-redundant sentences as in \cite{bilmes2010} and \cite{wenpeng2015}. For instance, \cite{shen2010} propose a greedy algorithm to find a dominating set of nodes in the sentence graph. A summary is then formed with the corresponding set of sentences. Similarly, \cite{bilmes2010} extract a set of sentences with a maximal similarity with the entire corpus and a minimal pairwise lexical similarity, which is modelled as a multi-objective optimization problem. In contrast, \cite{takamura2009} propose a coverage approach in which a set of sentences maximizing the number of distinct relevant terms is selected. Finally, \cite{wenpeng2015} propose a two step approach in which individual sentence relevance scores are computed first. Then a set of sentences with a maximal total relevance and a minimal joint redundancy is selected. All three methods attempt to solve NP-hard problems. Hence, they propose approximation algorithms based on the theory of submodular functions. 

Going beyond pairwise lexical similarities between sentences and relations based on the co-occurrence of terms, hypergraph models were proposed, in which nodes are sentences and hyperedges model group relationships between sentences \cite{hypersum}. The hyperedges of the hypergraph capture topical relationships among groups of sentences. Existing hypergraph-based systems \cite{hypersum,herf} combine pairwise lexical similarities and clusters of lexically similar sentences to form the hyperedges of the hypergraph. Hypergraph ranking algorithms are then applied to identify important and query-relevant sentences. However, they do not provide any interpretation for the clusters of sentences discovered by their method. Moreover, these clusters do not overlap, which is incoherent with the fact that each sentence carries multiple information and hence belongs to multiple semantic groups of sentences. In contrast, each hyperedge in our proposed hypergraph connects sentences covering the same topic, and these hyperedges do overlap. 

A minimal hypergraph transversal is a subset of the nodes of hypergraph of minimum cardinality and such that each hyperedge of the hypergraph is incident to at least one node in the subset \cite{gunopulous1997}. Theoretically equivalent to the minimum hitting set problem, the problem of finding a minimum hypergraph transversal can be viewed as finding a subset of representative nodes covering the essential information carried by each hyperedge. Hence, hypergraph transversals find applications in various areas such as computational biology, boolean algebra and data mining \cite{andrew2017}. Extensions of hypergraph transversals to include hyperedge and node weights were also proposed in \cite{boros}. Since the associated optimization problems are generally NP-hard, various approximation algorithms were proposed, including greedy algorithms \cite{wolsey1982} and LP relaxations \cite{gainerdewar2017}. The problem of finding a hypergraph transversal is conceptually similar to that of finding a summarizing subset of a set of objects modelled as a hypergraph. However, to the best of our knowledge, there was no attempt to use hypergraph transversals for text summarization in the past. Since it seeks a set of jointly relevant sentences, our method shares some similarities with existing graph-based models that apply global optimization strategies for sentence selection \cite{takamura2009,bilmes2010,wenpeng2015}. However, our hypergraph better captures topical relationships among sentences than the simple graphs based on lexical similarities between sentences.

\section{Problem statement and system overview}\label{overallSection}
Given a corpus of $N_d$ documents and a user-defined query $q$, we intend to produce a summary of the documents with the information that is considered both central in the corpus and relevant to the query. Since we limit ourselves to the production of extracts, our task is to extract a set $S$ of relevant sentences from the corpus and to aggregate them to build a summary. Let $N_s$ be the total number of sentences in the corpus. We further split the task into two subtasks:
\begin{itemize}
\itemsep0em
\item \textbf{target summary length}: the summary must cover the largest amount of relevant information while not exceeding a \text{target length} $L$, namely $\sum_{i\in S}L_i\leq L$, where $\{L_i,1\leq i\leq N_s\}$ represent the lengths of the sentences,
\item \textbf{target coverage}: the summary must have a minimum length while achieving a \textit{target coverage} of the information expressed by a parameter $\gamma\in [0,1]$ expressing the fraction of the information present in the corpus that must be covered by the summary (in a sense that will be clarified).
\end{itemize}

The sentences in the set $S$ are then aggregated to form the final summary. Figure \ref{algochart} summarizes the steps of our proposed method. After some preprocessing steps, the themes are detected based on a topic detection algorithm which tags each sentence with multiple topics. A theme-based hypergraph is then built with the weight of each theme reflecting both its importance in the corpus and its similarity with the query. Finally, depending on the task at hand, one of two types of hypergraph transversal is generated. If the summary must not exceed a \textit{target summary length}, then a \textit{maximal budgeted hypergraph transversal} is generated. If the summary must achieve a \textit{target coverage}, then a \textit{minimal soft hypergraph transversal} is generated. Finally the sentences corresponding to the generated transversal are selected for the summary. 

\begin{figure}[!h]
\centering
\includegraphics[width=.9\textwidth]{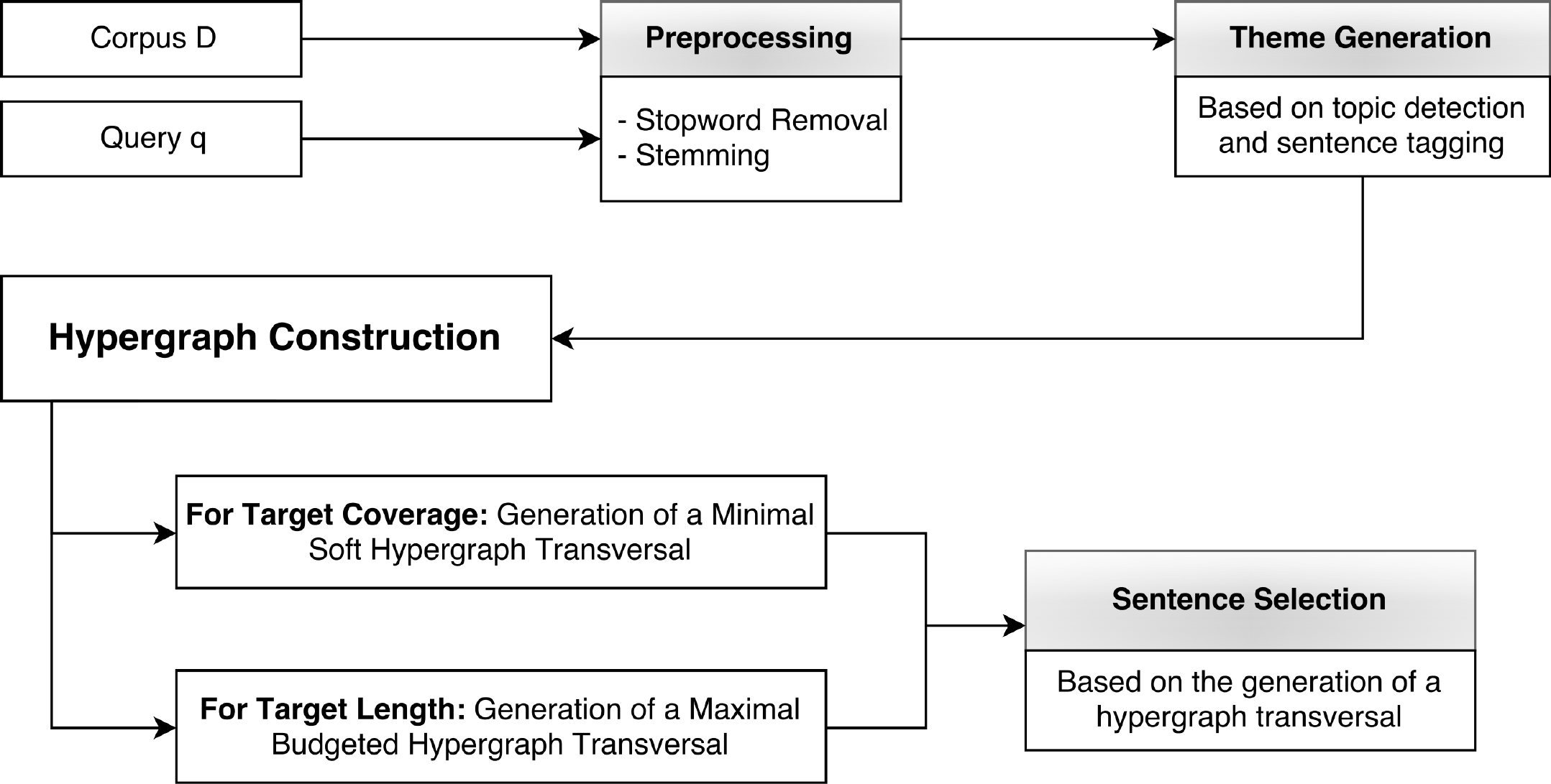}
\caption{Algorithm Chart.}
\label{algochart}
\end{figure}

\section{Summarization based on hypergraph transversals}\label{mainSection}
In this section, we present the key steps of our algorithm: after some standard preprocessing steps, topics of semantically related terms are detected from which themes grouping topically similar sentences are extracted. A hypergraph is then formed based on the sentence themes and sentences are selected based on the detection of a hypergraph transversal.

\subsection{Preprocessing and similarity computation}
As the majority of extractive summarization approaches, our model is based on the representation of sentences as vectors. To reduce the size of the vocabulary, we remove stopwords that do not contribute to the meaning of sentences such as "the" or "a", using a publicly available list of $667$ stopwords \footnote{Stopword Lists by Ranks NL Webmaster Tools, https://www.ranks.nl/stopwords, accessed on 15 November 2017}. The words are also stemmed using Porter Stemmer \cite{stemmer}. Let $N_t$ be the resulting number of distinct terms after these two preprocessing steps are performed. We define the \textit{inverse sentence frequency} $\text{isf}(t)$ \cite{KBS1} as
\begin{equation}
\text{isf}(t)=\log\left(\frac{N_s}{N_s^t}\right)
\end{equation} 
where $N_s^t$ is the number of sentences containing term $t$. This weighting scheme yields higher weights for rare terms which are assumed to contribute more to the semantics of sentences \cite{KBS1}.  Sentence $i$ is then represented by a vector $s_i=[\text{tfisf}(i,1),...,\text{tfisf}(i,N_t)]$ where
\begin{equation}
\text{tfisf}(i,t)=\text{tf}(i,t)\text{isf}(t)
\end{equation}
and $\text{tf}(i,t)$ is the frequency of term $t$ in sentence $i$. Finally, to denote the similarity between two text fragments $a$ and $b$ (which can be sentences, groups of sentences or the query), we use the cosine similarity between the $\text{tfisf}$ representations of $a$ and $b$, as suggested in \cite{R17}:
\begin{equation}\label{similaritDef}
\text{sim}(a,b)=\frac{\sum_t \text{tfisf}(a,t)\text{tfisf}(b,t)}{\sqrt{\sum_t\text{tfisf}(a,t)^2}\sqrt{\sum_t\text{tfisf}(b,t)^2}}
\end{equation}
where $\text{tfisf}(a,t)$ is also defined as the frequency of term $t$ in fragment $a$ multiplied by $\text{isf}(t)$. This similarity measure will be used in section \ref{hypergraphSubsection} to compute the similarity with the query $q$.

\subsection{Sentence theme detection based on topic tagging}\label{themeSubsection}
As mentioned in section \ref{introSection}, our hypergraph model is based on the detection of themes. A theme is defined as a group of sentences covering the same topic. Hence, our theme detection algorithm is based on a $3$-step approach: the extraction of topics, the process of tagging each sentence with multiple topics and the detection of themes based on topic tags.

A topic is viewed as a set of semantically similar terms, namely terms that refer to the same subject or the same piece of information. In the context of a specific corpus of related documents, a topic can be defined as a set of terms that are likely to occur close to each other in a document \cite{moitra2012}. In order to extract topics, we make use of a clustering approach based on the definition of a semantic dissimilarity between terms. For terms $u$ and $v$, we first define the joint $\text{isf}$ weight $\text{isf}(u,v)$ as
\begin{equation}
\text{isf}(u,v)=\log\left(\frac{N_s}{N_s^{uv}}\right)
\end{equation}
where $N_s^{uv}$ is the number of sentences in which both terms $u$ and $v$ occur together. Then, the semantic dissimilarity $d_{\text{sem}}(u,v)$ between the two terms is defined as
\begin{equation}\label{semDistEqn}
d_{\text{sem}}(u,v)=\frac{\text{isf}(u,v)-\min(\text{isf}(u),\text{isf}(v))}{\max(\text{isf}(u),\text{isf}(v))}
\end{equation}
which can be viewed as a special case of the so-called google distance which was already successfully applied to learn semantic similarities between terms on webpages \cite{cilibrasi2007}. Using concepts from information theory, $\text{isf}(u)$ represents the number of bits required to express the occurrence of term $u$ in a sentence using an optimally efficient code. Then, $\text{isf}(u,v)-\text{isf}(u)$ can be viewed as the number of bits of information in $v$ relative to $u$. Assuming $\text{isf}(v)\geq\text{isf}(u)$, $d_{\text{sem}}(u,v)$ can be viewed as the improvement obtained when compressing $v$ using a previously compressed code for $u$ and compressing $v$ from scratch \cite{cilibrasi2005clustering}. More details can be found in \cite{cilibrasi2007}. In practice, two terms $u$ and $v$ with a low value of $d_{\text{sem}}(u,v)$ are expected to consistently occur together in the same context, and they are thus considered to be semantically related in the context of the corpus.

Based on the semantic dissimilarity measure between terms, we define a topic as a group of terms with a high semantic density, namely a group of terms such that each term of the group is semantically related to a sufficiently high number of terms in the group. The DBSCAN algorithm is a method of density-based clustering that achieves this result by iteratively growing cohesive groups of agents, with the condition that each member of a group should contain a sufficient number of other members in an $\epsilon$-neighborhood around it \cite{dbscan}. Using the semantic dissimilarity as a distance measure, DBSCAN extracts groups of semantically related terms which are considered as topics. The advantages offered by DBSCAN over other clustering algorithms are threefold. First, DBSCAN is capable of detecting the number of clusters automatically. Second, although the semantic dissimilarity is symmetric and nonnegative, it does not satisfy the triangle inequality. This prevents the use of various clustering algorithms such as the agglomerative clustering with complete linkage \cite{maimon2005}. However, DBSCAN does not explicitly require the triangle inequality to be satisfied. Finally, it is able to detect noisy samples in low density region, that do not belong to any other cluster. 

Given a set of pairwise dissimilarity measures, a density threshold $\epsilon$ and a minimum neighborhood size $m$, DBSCAN returns a number $K$ of clusters and a set of labels $\{c(i)\in\{-1,1,...,K\}:1\leq i\leq N_t\}$ such that $c(i)=-1$ if term $i$ is considered a noisy term. While it is easy to determine a natural value for $m$, choosing a value for $\epsilon$ is not straightforward. Hence, we adapt DBSCAN algorithm to build our topic model referred to as \textit{Semantic Clustering Of Terms} (\textit{SEMCOT}) algorithm. It iteratively applies DBSCAN and decreases the parameter $\epsilon$ until the size of each cluster does not exceed a predefined value. Algorithm \ref{SEMCOTAlgo} summarizes the process. Apart from $m$, the algorithm also takes parameters $\epsilon_0$ (the initial value of $\epsilon$), $M$ (the maximum number of points allowed in a cluster) and $\beta\leq 1$ (a factor close to $1$ by which $\epsilon$ is multiplied until all clusters have sizes lower than $M$). Experiments on real-world data suggest empirical values of $m=3$, $\epsilon_0=0.9$, $M=0.1N_t$ and $\beta=0.95$. Additionally, we observe that, among the terms considered as noisy by DBSCAN, some could be highly infrequent terms with a high $\text{isf}$ value but yet having a strong impact on the meaning of sentences. Hence, we include them as topics consisting of single terms if their $\text{isf}$ value exceeds a threshold $\mu$ whose value is determined by cross-validation, as explained in section \ref{experimentSection}.\\

\begin{algorithm}[H]
INPUT: Semantic Dissimilarities $\{d_{\text{sem}}(u,v):1\leq u,v\leq N_t\}$,\\
PARAMETERS: $\epsilon_0$, $M$, $m$, $\beta\leq 1$, $\mu$\\
OUTPUT: Number $K$ of topics, topic tags $\{c(i):1\leq i\leq N_t\}$\\
$\epsilon\leftarrow\epsilon_0$, $\text{minTerms}\leftarrow m$, $\text{proceed}\leftarrow \text{True}$\\
\textbf{while} $\text{proceed}$:\\
\Indp $[c,K]\leftarrow DBSCAN(d_{\text{sem}},\epsilon,\text{minTerms})$\\
\textbf{if} $\underset{1\leq k\leq K}{\max}(|\{i:c(i)=k\}|)<M$: $\text{proceed}\leftarrow \text{False}$\\
\textbf{else}: $\epsilon\leftarrow \beta\epsilon$\\
\Indm\textbf{for each} $t$ s.t. $c(t)=-1$ (noisy terms):\\
\Indp \textbf{if} $\text{isf}(t)\geq \mu$:\\
\Indp $c(t)\leftarrow K+1$, $K\leftarrow K+1$\\
\caption{SEMCOT}
\label{SEMCOTAlgo}
\end{algorithm}~\\

Once the topics are obtained based on algorithm \ref{SEMCOTAlgo}, a \textit{theme} is associated to each topic, namely a group of sentences covering the same topic. The sentences are first tagged with multiple topics based on a scoring function. The score of the $l$-th topic in the $i$-th sentence is given by 
\begin{equation}
\sigma_{il}=\underset{t:c(t)=l}{\sum}\text{tfisf}(i,t)
\end{equation}
and the sentence is tagged with topic $l$ whenever $\sigma_{il}\geq\delta$, in which $\delta$ is a parameter whose value is tuned as explained in section \ref{experimentSection} (ensuring that each sentence is tagged with at least one topic). The scores are intentionally not normalized to avoid tagging short sentences with an excessive number of topics. The $l$-th theme is then defined as the set of sentences
\begin{equation}
T_l=\{i:\sigma_{il}\geq\delta,1\leq i\leq N_s\}.
\end{equation}

While there exist other summarization models based on the detection of clusters or groups of similar sentence, the novelty of our theme model is twofold. First, each theme is easily interpretable as the set of sentences associated to a specific topic. As such, our themes can be considered as groups of semantically related sentences. Second, it is clear that the themes discovered by our approach do overlap since a single sentence may be tagged with multiple topics. To the best of our knowledge, none of the previous cluster-based summarizers involved overlapping groups of sentences. Our model is thus more realistic since it better captures the multiplicity of the information covered by each sentence.

\subsection{Sentence hypergraph construction}\label{hypergraphSubsection}
A hypergraph is a generalization of a graph in which the hyperedges may contain any number of nodes, as expressed in definition \ref{hypergraphDef} \cite{hypersum}. Our hypergraph model moreover includes both hyperedge and node weights.

\begin{definition}[Hypergraph]\label{hypergraphDef}
A node- and hyperedge-weighted hypergraph is defined as a quadruplet $H=(V,E,\phi,w)$ in which $V$ is a set of nodes, $E\subseteq 2^{V}$ is a set of hyperedges, $\phi\in\mathbb{R}_+^{|V|}$ is a vector of positive node weights and $w\in\mathbb{R}_+^{|E|}$ is a vector of positive hyperedge weights.
\end{definition}

For convenience, we will refer to a hypergraph by its weight vectors $\phi$ and $w$, its hyperedges represented by a set $E\subseteq 2^V$ and its incidence lists $\text{inc}(i)=\{e\in E:i\in e\}$ for each $i\in V$.

As mentioned in section \ref{introSection}, our system relies on the definition of a theme-based hypergraph which models groups of semantically related sentences as hyperedges. Hence, compared to traditional graph-based summarizers, the hypergraph is able to capture more complex group relationships between sentences instead of being restricted to pairwise relationships.

In our sentence-based hypergraph, the sentences are the nodes and each theme defines a hyperedge connecting the associated sentences. The weight $\phi_i$ of node $i$ is the length of the $i$-th sentence, namely:
\begin{equation}
\begin{array}{l}
V = \{1,...,N_s\}\text{ and }\phi_i=L_i\text{, }\text{ }1\leq i\leq N_s\\
E = \{e_1,...,e_K\}\subseteq 2^V\\
e_l=T_l\text{ i.e. }e_l\in\text{inc}(i)\leftrightarrow i\in T_l
\end{array}
\end{equation}

Finally, the weights of the hyperedges are computed based on the centrality of the associated theme and its similarity with the query:
\begin{equation}\label{eqnHyperWeights}
w_l=(1-\lambda)\text{sim}(T_l,D)+\lambda\text{sim}(T_l,q)
\end{equation}
where $\lambda\in [0,1]$ is a parameter and $D$ represents the entire corpus. $\text{sim}(T_l,D)$ denotes the similarity of the set of sentences in theme $T_l$ with the entire corpus (using the tfisf-based similarity of equation \ref{similaritDef}) which measures the centrality of the theme in the corpus. $\text{sim}(T_l,q)$ refers to the similarity of the theme with the user-defined query $q$.

\subsection{Detection of hypergraph transversals for text summarization}
The sentences to be included in the query-oriented summary should contain the essential information in the corpus, they should be relevant to the query and, whenever required, they should either not exceed a \textit{target length} or jointly achieve a \textit{target coverage} (as mentioned in section \ref{overallSection}). Existing systems of graph-based summarization generally solve the problem by ranking sentences in terms of their \textit{individual} relevance \cite{lexrank,R17,hypersum}. Then, they extract a set of sentences with a maximal total relevance and pairwise similarities not exceeding a predefined threshold. However, we argue that the \textit{joint} relevance of a group of sentences is not reflected by the individual relevance of each sentence. And limiting the redundancy of selected sentences as done in \cite{hypersum} does not guarantee that the sentences jointly cover the relevant themes of the corpus.

Considering each topic as a distinct piece of information in the corpus, an alternative approach is to select the smallest subset of sentences covering each of the topics. The latter condition can be reformulated as ensuring that each theme has at least one of its sentences appearing in the summary. Using our sentence hypergraph representation, this corresponds to the detection of a minimal hypergraph transversal as defined below \cite{gunopulous1997}.

\begin{definition}\label{defHypergraphTransv}
Given an unweighted hypergraph $H=(V,E)$, a minimal hypergraph transversal is a subset $S^*\subseteq V$ of nodes satisfying
\begin{equation}
\begin{array}{rcl}
S^*&=&\underset{S\subseteq V}{\text{argmin}}|S|\\
&& \text{s.t. }\underset{i\in S}{\bigcup}\text{inc}(i)=E
\end{array}
\end{equation}
where $\text{inc}(i)=\{e:i\in e\}$ denotes the set of hyperedges incident to node $i$.
\end{definition}

Figure \ref{transversalExampleA} shows an example of hypergraph and a minimal hypergraph transversal of it (star-shaped nodes). In this case, since the nodes and the hyperedges are unweighted, the minimal transversal is not unique.

\begin{figure}[!h]
\centering
\includegraphics[width=.9\textwidth]{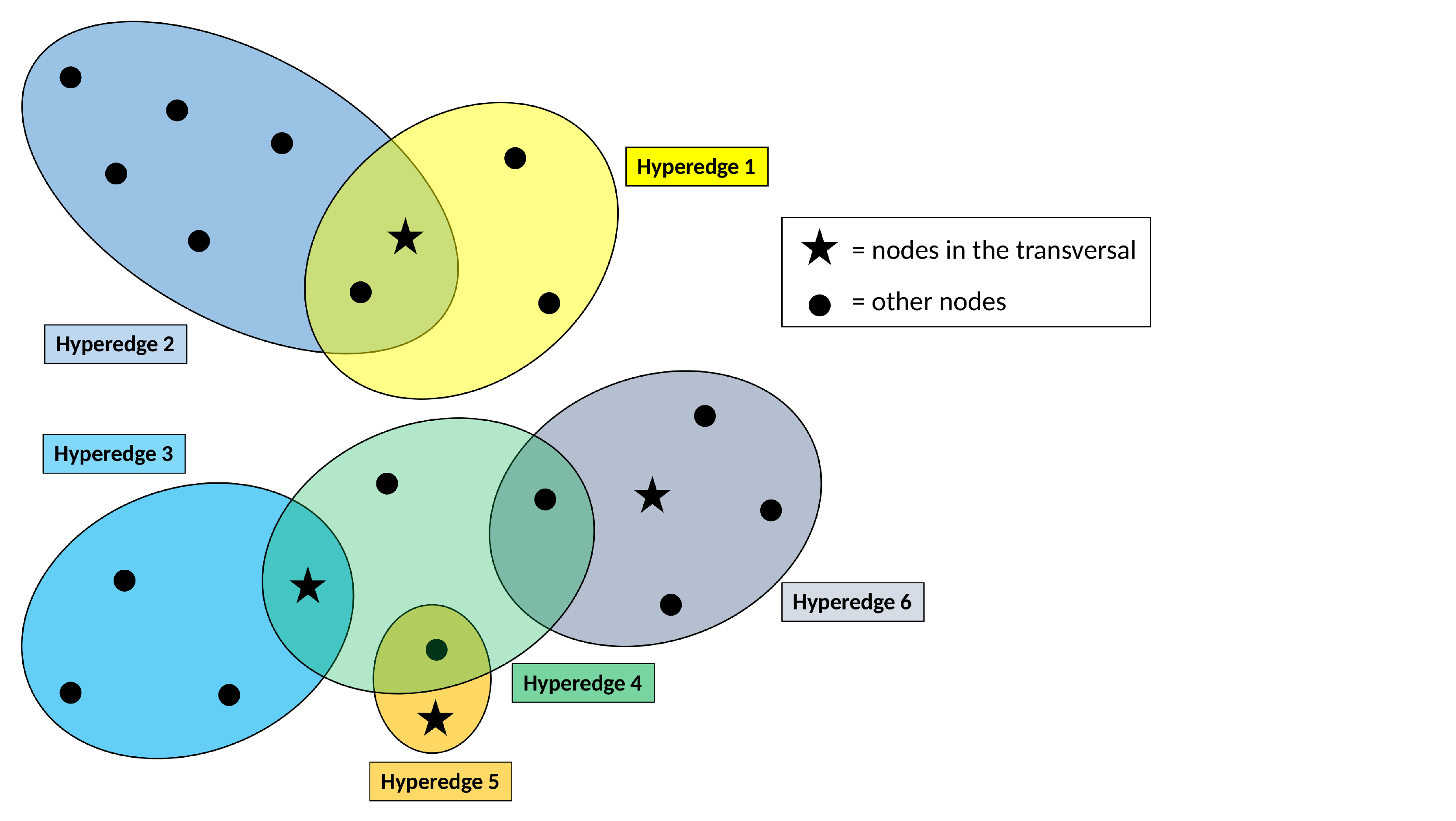}
\caption{Example of hypergraph and minimal hypergraph transversal.}
\label{transversalExampleA}
\end{figure}
 
The problem of finding a minimal transversal in a hypergraph is NP-hard \cite{karp1972}. However, greedy algorithms or LP relaxations provide good approximate solutions in practice \cite{gainerdewar2017}. As intended, the definition of transversal includes the notion of \textit{joint} coverage of the themes by the sentences. However, it neglects node and hyperedge weights and it is unable to identify query-relevant themes. Since both the sentence lengths and the relevance of themes should be taken into account in the summary generation, we introduce two extensions of transversal, namely the \textit{minimal soft hypergraph transversal} and the \textit{maximal budgeted hypergraph transversal}. A minimal soft transversal of a hypergraph is obtained by minimizing the total weights of selected nodes while ensuring that the total weight of covered hyperedges exceeds a given threshold.

\begin{definition}[minimal soft hypergraph transversal]
Given a node and hyperedge weighted hypergraph $H=(V,E,\phi,w)$ and a parameter $\gamma\in [0,1]$, a minimal soft hypergraph transversal is a subset $S^*\subseteq V$ of nodes satisfying
\begin{equation}
\begin{array}{rcl}
S^*&=&\underset{S\subseteq V}{\text{argmin}}\underset{i\in S}{\sum}\phi_i\\
&& \text{s.t. }\underset{e\in \text{inc}(S)}{\sum}w_e\geq\gamma W
\end{array}
\end{equation}
in which $\text{inc}(S)=\underset{i\in S}{\bigcup}\text{inc}(i)$ and $W=\sum_ew_e$. 
\end{definition}

The extraction of a minimal soft hypergraph transversal of the sentence hypergraph produces a summary of minimal length achieving a \textit{target coverage} expressed by parameter $\gamma\in [0,1]$. As mentioned in section \ref{overallSection}, applications of text summarization may also involve a hard constraint on the total summary length $L$. For that purpose, we introduce the notion of \textit{maximal budgeted hypergraph transversal} which maximizes the volume of covered hyperedges while not exceeding the target length.

\begin{definition}[maximal budgeted hypergraph transversal]
Given a node and hyperedge weighted hypergraph $H=(V,E,\phi,w)$ and a parameter $L>0$, a maximal budgeted hypergraph transversal is a subset $S^*\subseteq V$ of nodes satisfying
\begin{equation}\label{defBudgTransv}
\begin{array}{rcl}
S^*&=&\underset{S\subseteq V}{\text{argmax}}\underset{e\in \text{inc}(S)}{\sum}w_e\\
&& \text{s.t. }\underset{i\in S}{\sum}\phi_i\leq L.
\end{array}
\end{equation}
\end{definition} 

We refer to the function $\underset{e\in \text{inc}(S)}{\sum}w_e$ as the \textit{hyperedge coverage} of set $S$. We observe that both weighted transversals defined above include the notion of \textit{joint} coverage of the hyperedges by the selected nodes. As a result and from the definition of hyperedge weights (equation \ref{eqnHyperWeights}), the resulting summary covers themes that are both central in the corpus and relevant to the query. This approach also implies that the resulting summary does not contain redundant sentences covering the exact same themes. As a result selected sentences are expected to cover different themes and to be semantically diverse. Both the problems of finding a minimal soft transversal or finding a maximal budgeted transversal are NP-hard as proved by theorem \ref{npHardTheorem}. 

\begin{theorem}[NP-hardness]\label{npHardTheorem}
The problems of finding a minimal soft hypergraph transversal or a maximal budgeted hypergraph transversal in a weighted hypergraph are NP-hard.
\end{theorem}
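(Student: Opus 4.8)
The plan is to prove each of the two claims separately by exhibiting a classical NP-hard problem as a special case, so that a hypothetical polynomial-time solver for our transversal problem would immediately solve the classical one. For the minimal soft hypergraph transversal, I would reduce directly from the (unweighted) minimal transversal problem of Definition~\ref{defHypergraphTransv}, whose NP-hardness is already recalled via \cite{karp1972}. Given an arbitrary unweighted hypergraph $H=(V,E)$, construct the weighted instance with $\phi_i=1$ for all $i$, $w_e=1$ for all $e$, and $\gamma=1$. Since every $w_e>0$, the coverage constraint $\sum_{e\in\text{inc}(S)}w_e\geq\gamma W=W$ holds if and only if $\text{inc}(S)=E$, i.e. $S$ covers every hyperedge. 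The objective $\sum_{i\in S}\phi_i$ then reduces to $|S|$, so a minimal soft transversal of this instance is exactly a minimum-cardinality transversal of $H$. Hence a polynomial algorithm for the soft transversal would solve the minimal transversal problem, establishing NP-hardness.

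For the maximal budgeted hypergraph transversal, I would reduce from the maximum coverage problem (maximum $k$-coverage), which is NP-hard. Given a collection of sets $C_1,\dots,C_n$ over a ground set $\{1,\dots,m\}$ together with an integer $k$, I would build a hypergraph with one node per set and one hyperedge per element, declaring node $j$ incident to the hyperedge of element $e$ precisely when $e\in C_j$; each hyperedge is thus the subset $\{j:e\in C_j\}\subseteq V$, so the construction yields a valid $E\subseteq 2^V$. Setting $\phi_i=1$, $w_e=1$ and $L=k$, the incidence set $\text{inc}(S)$ of a chosen node set $S$ corresponds exactly to the elements covered by the selected sets, and maximizing $\sum_{e\in\text{inc}(S)}w_e$ subject to $\sum_{i\in S}\phi_i\leq L$ is precisely the task of covering the largest number of elements while selecting at most $k$ sets. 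This is the maximum coverage instance, so the budgeted transversal problem is NP-hard.

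Both reductions are clearly polynomial in the size of the input and use only admissible parameter values ($\gamma\in[0,1]$, positive weights, $L>0$), so the special cases are genuine instances of our problems. I do not expect a substantive obstacle here; the only points requiring care are the two equivalences. For the soft transversal, one must verify that positivity of the hyperedge weights makes the threshold $\gamma=1$ enforce \emph{full} coverage rather than partial coverage, which is what lets $|S|$-minimization coincide with the minimal transversal. For the budgeted transversal, one must check that the node/hyperedge incidence structure faithfully encodes the set–element membership of the maximum coverage instance, so that $\text{inc}(S)$ tracks exactly the covered elements. Since neither step requires more than tracking these correspondences, the argument reduces to stating the two reductions cleanly.
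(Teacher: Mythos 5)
Your proof is correct and takes essentially the same approach as the paper: the paper likewise specializes the soft transversal problem to $\gamma=1$ with unit node weights to recover the classical minimal transversal (set cover) problem of Definition~\ref{defHypergraphTransv}, and identifies the budgeted transversal problem with the maximum coverage problem under a knapsack (here cardinality) constraint, citing \cite{karp1972}. Your version merely spells out as explicit reductions, with the correct direction and the role of weight positivity, what the paper states as bare equivalences.
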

\begin{proof}
Regarding the minimal soft hypergraph transversal problem, with parameter $\gamma=1$ and unit node weights, the problem is equivalent to the classical set cover problem (definition \ref{defHypergraphTransv}) which is NP-complete \cite{karp1972}. The maximal budgeted hypergraph transversal problem can be shown to be equivalent to the maximum coverage problem with knapsack constraint which was shown to be NP-complete in \cite{karp1972}.
\end{proof}

Since both problems are NP-hard, we formulate polynomial time algorithms to find approximate solutions to them and we provide the associated approximation factors. The algorithms build on the submodularity and the non-decreasing properties of the hyperedge coverage function, which are defined below.

\begin{definition}[Submodular and non-decreasing set functions]\label{defSubmNonDecr}
Given a finite set $A$, a function $f:2^{A}\rightarrow\mathbb{R}$ is monotonically non-decreasing if $\forall S\subset A$ and $\forall u\in A\setminus S$, 
\begin{equation}
f(S\cup \{u\})\geq f(S)
\end{equation}
and it is submodular if $\forall S,T$ with $S\subseteq T\subset A$, and $\forall u\in A\setminus T$, 
\begin{equation}
f(T\cup \{u\})-f(T)\leq f(S\cup \{u\})-f(S).
\end{equation}
\end{definition}

Based on definition \ref{defSubmNonDecr}, we prove in theorem \ref{thmSubm} that the hyperedge coverage function is submodular and monotonically non-decreasing, which provides the basis of our algorithms.

\begin{theorem}\label{thmSubm}
Given a hypergraph $H=(V,E,\phi,w)$, the hyperedge coverage function $f:2^V\rightarrow\mathbb{R}$ defined by 
\begin{equation}
f(S)=\underset{e\in \text{inc}(S)}{\sum}w_e
\end{equation}
is submodular and monotonically non-decreasing.
\end{theorem}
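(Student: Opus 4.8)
The plan is to work directly from the definition of the hyperedge coverage function and reduce both properties to the (non)negativity of the hyperedge weights. The central device is a clean formula for the marginal gain of adding a single node. For any $S\subseteq V$ and any $u\in V\setminus S$, adding $u$ can only bring in hyperedges incident to $u$ that were not already covered, so I would first establish the identity
\begin{equation}
f(S\cup\{u\})-f(S)=\sum_{e\in\text{inc}(u)\setminus\text{inc}(S)}w_e,
\end{equation}
where $\text{inc}(S)=\bigcup_{i\in S}\text{inc}(i)$. This follows by noting that $\text{inc}(S\cup\{u\})=\text{inc}(S)\cup\text{inc}(u)$ and subtracting off the overlap $\text{inc}(S)$, so that only the genuinely new hyperedges $\text{inc}(u)\setminus\text{inc}(S)$ contribute to the difference.

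Monotonicity is then immediate: since every hyperedge weight $w_e$ is positive by Definition \ref{hypergraphDef}, the right-hand side above is a sum of nonnegative terms, whence $f(S\cup\{u\})-f(S)\geq 0$, which is the non-decreasing condition of Definition \ref{defSubmNonDecr}.

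For submodularity, I would take $S\subseteq T\subset V$ and $u\in V\setminus T$. The key step is the monotonicity of the incidence operator: $S\subseteq T$ implies $\text{inc}(S)\subseteq\text{inc}(T)$, and therefore $\text{inc}(u)\setminus\text{inc}(T)\subseteq\text{inc}(u)\setminus\text{inc}(S)$. Applying the marginal-gain identity to both $S$ and $T$ and summing the nonnegative weights over the smaller index set on the left gives
\begin{equation}
f(T\cup\{u\})-f(T)=\sum_{e\in\text{inc}(u)\setminus\text{inc}(T)}w_e\;\leq\;\sum_{e\in\text{inc}(u)\setminus\text{inc}(S)}w_e=f(S\cup\{u\})-f(S),
\end{equation}
which is exactly the submodularity inequality of Definition \ref{defSubmNonDecr}.

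An alternative route I would keep in mind is to write $f(S)=\sum_{e\in E}w_e\,\mathbf{1}[\,e\cap S\neq\emptyset\,]$ and observe that each single-hyperedge indicator $\mathbf{1}[\,e\cap S\neq\emptyset\,]$ is itself a monotone submodular function; since $f$ is a nonnegative linear combination of such functions, and both properties are preserved under nonnegative linear combinations, the result follows. I expect no genuine obstacle here: the argument is essentially the classical fact that weighted coverage functions are submodular. The only point requiring care is the bookkeeping of incidence sets, namely stating the marginal gain over $\text{inc}(u)\setminus\text{inc}(S)$ rather than over all of $\text{inc}(u)$, since double-counting hyperedges that are already covered would invalidate the comparison.
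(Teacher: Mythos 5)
Your proof is correct and follows essentially the same route as the paper's: both arguments rest on the marginal-gain identity $f(S\cup\{u\})-f(S)=\sum_{e\in\text{inc}(u)\setminus\text{inc}(S)}w_e$ together with the inclusion $\text{inc}(u)\setminus\text{inc}(T)\subseteq\text{inc}(u)\setminus\text{inc}(S)$ for $S\subseteq T$, the only cosmetic difference being that the paper subtracts the two marginal gains and identifies the difference as $\sum_{e\in(\text{inc}(T)\cap\text{inc}(u))\setminus\text{inc}(S)}w_e\geq 0$, whereas you compare the index sets directly. Your treatment is in fact slightly more complete, since you spell out monotonicity (which the paper dismisses as clear) and note the alternative view of $f$ as a nonnegative combination of indicator coverage functions.
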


\begin{proof}
The hyperege coverage function $f$ is clearly monotonically non-decreasing and it is submodular since $\forall S\subseteq T\subset V$, and $s\in V\setminus T$, 
\begin{equation}
\begin{array}{l}
(f(S\cup\{s\})-f(S))-(f(T\cup\{s\})-f(T))\\
=\left[\underset{e\in\text{inc}(S\cup \{s\})}{\sum}w_e-\underset{e\in\text{inc}(S)}{\sum}w_e\right]-\left[\underset{e\in\text{inc}(T\cup \{s\})}{\sum}w_e-\underset{e\in\text{inc}(T)}{\sum}w_e\right]\\
= \left[ \underset{e\in \text{inc}(\{s\})\setminus\text{inc}(S)}{\sum}w_e\right]-\left[ \underset{e\in \text{inc}(\{s\})\setminus\text{inc}(T)}{\sum}w_e\right]\\
= \underset{e\in (\text{inc}(T)\cap\text{inc}(\{s\}))\setminus\text{inc}(S)}{\sum}w_e\geq 0
\end{array}
\end{equation}
where $\text{inc}(R)=\{e:e\cap S\neq\emptyset\}$ for $R\subseteq V$. The last equality follows from $\text{inc}(S)\subseteq \text{inc}(T)$ and $\text{inc}(\{s\})\setminus\text{inc}(T)\subseteq\text{inc}(\{s\})\setminus\text{inc}(S)$.
\end{proof}

Various classes of NP-hard problems involving a submodular and non-decreasing function can be solved approximately by polynomial time algorithms with provable approximation factors. Algorithms \ref{budgTransAlgo} and \ref{softTransAlgo} are our core methods for the detection of approximations of maximal budgeted hypergraph transversals and minimal soft hypergraph transversals, respectively. In each case, a transversal is found and the summary is formed by extracting and aggregating the associated sentences. Algorithm \ref{budgTransAlgo} is based on an adaptation of an algorithm presented in \cite{leskovec2007} for the maximization of submodular functions under a Knaspack constraint. It is our primary transversal-based summarization model, and we refer to it as the method of \textit{Transversal Summarization with Target Length} (\textit{TL-TranSum} algorithm). Algorithm \ref{softTransAlgo} is an application of the algorithm presented in \cite{wolsey1982} for solving the submodular set covering problem. We refer to it as \textit{Transversal Summarization with Target Coverage} (\textit{TC-TranSum} algorithm). Both algorithms produce transversals by iteratively appending the node inducing the largest increase in the total weight of the covered hyperedges relative to the node weight. While long sentences are expected to cover more themes and induce a larger increase in the total weight of covered hyperedges, the division by the node weights (i.e. the sentence lengths) balances this tendency and allows the inclusion of short sentences as well. In contrast, the methods of sentence selection based on a maximal relevance and a minimal redundancy such as, for instance, the maximal marginal relevance approach of \cite{carbonell1998}, tend to favor the selection of long sentences only. The main difference between algorithms \ref{budgTransAlgo} and \ref{softTransAlgo} is the stopping criterion: in algorithm \ref{softTransAlgo}, the approximate minimal soft transversal is obtained whenever the targeted hyperedge coverage is reached while algorithm \ref{budgTransAlgo} appends a given sentence to the approximate maximal budgeted transversal only if its addition does not make the summary length exceed the target length $L$.\\

\begin{algorithm}[H]
INPUT: Sentence Hypergraph $H=(V,E,\phi,w)$, target length $L$.\\
OUTPUT: Set $S$ of sentences to be included in the summary.\\
\textbf{for each} $i\in\{1,...,N_s\}$: $r_i\leftarrow \frac{1}{\phi_i}\underset{e\in\text{inc}(i)}{\sum}w_e$\\
$R\leftarrow\emptyset$, $Q\leftarrow V$, $f\leftarrow 0$\\
\textbf{while} $Q\neq\emptyset$:\\
\Indp $s^*\leftarrow \underset{i\in Q}{\text{argmax}}\text{ }r_i$, $Q\leftarrow Q\setminus\{s^*\}$\\
\textbf{if} $\phi_{s^*}+f\leq L$:\\
\Indp $R\leftarrow R\cup\{s^*\}$, $f\leftarrow f+l^*$\\
\textbf{for each} $i\in \{1,...,N_s\}$: $r_i\leftarrow r_i-\frac{\underset{e\in\text{inc}(s^*)\cap\text{inc}(i)}{\sum} w_e}{\phi_i}$\\
\Indm \Indm Let $G\leftarrow \{\{i\}\text{ : }i\in V,\phi_i\leq L\}$\\
$S\leftarrow \underset{S\in\{Q\}\cup G}{\text{argmax}}\text{ }\text{ }\text{ }\underset{e\in\text{inc}(S)}{\sum}w_e$\\
return $S$\\
\caption{\textit{Transversal Summarization with Target Length} (\textit{TL-TranSum})}
\label{budgTransAlgo}
\end{algorithm}~\\

\begin{algorithm}[H]
INPUT: Sentence Hypergraph $H=(V,E,\phi,w)$, parameter $\gamma\in [0,1]$.\\
OUTPUT: Set $S$ of sentences to be included in the summary.\\
\textbf{for each} $i\in\{1,...,N_s\}$: $r_i\leftarrow \frac{1}{\phi_i}\underset{e\in\text{inc}(i)}{\sum}w_e$\\
$S\leftarrow\emptyset$, $Q\leftarrow V$, $\tilde{W}\leftarrow 0$, $W\leftarrow \sum_ew_e$\\
\textbf{while} $Q\neq\emptyset$ and $\tilde{W}<\gamma W$:\\
\Indp $s^*\leftarrow \underset{i\in Q}{\text{argmax}}\text{ }r_i$\\
$S\leftarrow S\cup\{s^*\}$, $\tilde{W}\leftarrow \tilde{W}+\phi_{s*}r_{s^*}$\\
\textbf{for each} $i\in \{1,...,N_s\}$: $r_i\leftarrow r_i-\frac{\underset{e\in\text{inc}(s^*)\cap\text{inc}(i)}{\sum} w_e}{\phi_i}$\\
\Indm return $S$\\
\caption{\textit{Transversal Summarization with Target Coverage} (\textit{TC-TranSum})}
\label{softTransAlgo}
\end{algorithm}~\\

We next provide theoretical guarantees that support the formulation of algorithms \ref{budgTransAlgo} and \ref{softTransAlgo} as approximation algorithms for our hypergraph transversals. Theorem \ref{thmGuaranteeBudg} provides a constant approximation factor for the output of algorithm \ref{budgTransAlgo} for the detection of minimal soft hypergraph transversals. It builds on the submodularity and the non-decreasing property of the hyperedge coverage function.

\begin{theorem}\label{thmGuaranteeBudg}
Let $S^L$ be the summary produced by our TL-TranSum algorithm \ref{budgTransAlgo}, and $S^*$ be a maximal budgeted transversal associated to the sentence hypergraph, then
\begin{equation}
\underset{e\in \text{inc}(S^L)}{\sum}w_e \geq \frac{1}{2}\left(1-\frac{1}{e}\right)\underset{e\in \text{inc}(S^*)}{\sum}w_e.
\end{equation}
\end{theorem}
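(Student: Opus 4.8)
The plan is to recognise Algorithm~\ref{budgTransAlgo} as the cost--benefit greedy procedure for maximising a monotone submodular function under a knapsack constraint, augmented with a comparison against the best feasible singleton, and to adapt the classical analysis of \cite{leskovec2007}. Throughout I write $f(S)=\sum_{e\in\text{inc}(S)}w_e$, which by Theorem~\ref{thmSubm} is submodular and monotonically non-decreasing with $f(\emptyset)=0$, and I abbreviate $\text{OPT}=f(S^*)$. I first note that one may assume without loss of generality that $\phi_i\le L$ for every node $i$: a sentence longer than $L$ can belong to no feasible set, so it lies neither in $S^*$ nor in any set the algorithm returns, and deleting such nodes changes neither $\text{OPT}$ nor the output. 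In particular every singleton eventually produced in the set $G$ of Algorithm~\ref{budgTransAlgo} is feasible.

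First I would isolate the per-step progress of the greedy phase. Let $G_0=\emptyset,G_1,G_2,\dots$ be the partial solutions obtained by appending, in selection order, the nodes $s_1,s_2,\dots$ chosen \emph{before the first budget violation}, and let $s_t$ be the first node the greedy phase examines but rejects because adding it would exceed $L$. The key inequality is
\begin{equation}
\text{OPT}-f(G_i)\le\frac{L}{\phi_{s_{i+1}}}\bigl(f(G_{i+1})-f(G_i)\bigr),
\end{equation}
valid for each $i<t$ and also with $s_{i+1}$ replaced by the rejected node $s_t$. It follows by combining the submodular decomposition $\text{OPT}\le f(G_i)+\sum_{s\in S^*\setminus G_i}\bigl(f(G_i\cup\{s\})-f(G_i)\bigr)$, the greedy rule that $s_{i+1}$ maximises the marginal-gain-to-cost ratio among the still-available nodes (so each term above is at most $\phi_s\,\frac{f(G_{i+1})-f(G_i)}{\phi_{s_{i+1}}}$), and the feasibility bound $\sum_{s\in S^*}\phi_s\le L$. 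Restricting attention to the prefix preceding the first rejection is what guarantees that every element of $S^*$ is still available, so the ratio comparison is legitimate for all $s\in S^*\setminus G_i$.

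Next I would iterate this one-step bound into $\text{OPT}-f(G_i)\le\text{OPT}\prod_{j=1}^{i}(1-\phi_{s_j}/L)$, apply it once more with the rejected node to get $f(G_{t-1}\cup\{s_t\})\ge\bigl(1-\prod_{j=1}^{t}(1-\phi_{s_j}/L)\bigr)\text{OPT}$, and use $\sum_{j=1}^{t}\phi_{s_j}>L$ together with $\prod_{j}(1-\phi_{s_j}/L)\le\exp(-\sum_j\phi_{s_j}/L)\le e^{-1}$ to conclude $f(G_{t-1}\cup\{s_t\})\ge(1-1/e)\text{OPT}$. Submodularity and $f(\emptyset)=0$ then give $f(G_{t-1})+f(\{s_t\})\ge f(G_{t-1}\cup\{s_t\})$, so at least one of the two summands is at least $\frac{1}{2}(1-1/e)\text{OPT}$. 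Finally, the greedy set $R\supseteq G_{t-1}$ satisfies $f(R)\ge f(G_{t-1})$ by monotonicity, the singleton $\{s_t\}$ is feasible and hence among the candidates in $G$, and Algorithm~\ref{budgTransAlgo} returns whichever of $R$ and the best singleton has larger coverage; thus $f(S^L)\ge\frac{1}{2}(1-1/e)\text{OPT}$, which is the claim.

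The hard part will be the bookkeeping around the first rejected node $s_t$: one must argue carefully that no optimal element has been discarded before step $t$ (so the decomposition bound is applicable to the prefix), and that, thanks to the normalisation $\phi_i\le L$, the rejected node is itself a feasible singleton captured by the branch $G$ of the algorithm. A second, more routine, point to verify is that the incremental updates $r_i\leftarrow r_i-\phi_i^{-1}\sum_{e\in\text{inc}(s^*)\cap\text{inc}(i)}w_e$ indeed maintain the marginal-coverage-to-length ratios used in the greedy selection, so that Algorithm~\ref{budgTransAlgo} genuinely realises the cost--benefit greedy order assumed in the analysis.
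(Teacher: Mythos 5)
Your proposal is correct, but it is a genuinely more self-contained route than the one the paper takes. The paper's own proof is essentially a citation: having established in Theorem \ref{thmSubm} that the hyperedge coverage function $f(S)=\sum_{e\in\text{inc}(S)}w_e$ is submodular and monotonically non-decreasing, it observes that finding a maximal budgeted transversal is exactly maximization of such a function under a knapsack constraint, and then invokes Theorem 2 of \cite{leskovec2007} as a black box to obtain the factor $\frac{1}{2}\left(1-\frac{1}{e}\right)$, asserting that Algorithm \ref{budgTransAlgo} implements the required cost--benefit greedy iterations. You instead reconstruct the proof of that cited result from first principles: the decomposition $\mathrm{OPT}\le f(G_i)+\sum_{s\in S^*\setminus G_i}\bigl(f(G_i\cup\{s\})-f(G_i)\bigr)$, the greedy-ratio comparison yielding $\mathrm{OPT}-f(G_i)\le\frac{L}{\phi_{s_{i+1}}}\bigl(f(G_{i+1})-f(G_i)\bigr)$, the telescoping into a $(1-1/e)$ guarantee at the first rejected node $s_t$, and the subadditive split $f(G_{t-1}\cup\{s_t\})\le f(G_{t-1})+f(\{s_t\})$ that justifies comparing the greedy set against the best feasible singleton in $G$. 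Both arguments rest on the same two pillars (Theorem \ref{thmSubm} plus the greedy structure of the algorithm), so mathematically you are unrolling what the paper delegates to the literature; what your version buys is explicit verification of precisely the points the citation hides --- that sentences with $\phi_i>L$ can be deleted without altering the algorithm's trajectory (a rejection has no side effect on its state), that the first rejected node is itself a feasible singleton captured by the branch $G$, and that the key inequality is only applied on the prefix of iterations before the first rejection, where no element of $S^*$ has yet been discarded --- thereby removing any doubt about whether Algorithm \ref{budgTransAlgo}, with its singleton-comparison step, really matches the hypotheses of the theorem being cited. Two loose ends you should still close: the degenerate case in which the greedy never rejects any node (then $R=V\supseteq S^*$ and the claim is trivial by monotonicity), and the reading of the paper's pseudocode, which contains typos (the final $\text{argmax}$ ranges over $\{Q\}\cup G$ with $Q=\emptyset$ at that point, clearly intended to be $R$, and the update of $r_i$ as written can double-subtract weights of hyperedges shared by several selected nodes); like the paper, you interpret the algorithm as the intended exact marginal-gain greedy, which is the right reading but worth stating.
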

\begin{proof}
Since the hyperedge coverage function is submodular and monotonically non-decreasing, the extraction of a maximal budgeted transversal is a problem of maximization of a submodular and monotonically non-decreasing function under a Knapsack constraint, namely
\begin{equation}
\underset{S\subseteq V}{\max}f(S)\text{ s.t. }\underset{i\in S}{\sum}\phi_i\leq L
\end{equation}
where $f(S)=\underset{e\in \text{inc}(S)}{\sum}w_e$. Hence, by theorem 2 in \cite{leskovec2007}, the algorithm forming a transversal $S^F$ by iteratively growing a set $S_t$ of sentences according to
\begin{equation}\label{iterativeAlgobudg}
S_{t+1}=S_t\cup\left\lbrace \underset{s\in V\setminus S_t}{\text{argmax}}\left\lbrace\frac{f(S\cup\{s\})-f(S)}{\phi_s}, \phi_s+\underset{i\in S_t}{\sum}\phi_i\leq L\right\rbrace\right\rbrace
\end{equation}
produces a final summary $S^F$ satisfying
\begin{equation}
f(S^F)\geq f(S^*)\frac{1}{2}\left(1-\frac{1}{e}\right).
\end{equation}
As algorithm \ref{budgTransAlgo} implements the iterations expressed by equation \ref{iterativeAlgobudg}, it achieves a constant approximation factor of $\frac{1}{2}\left(1-\frac{1}{e}\right)$.
\end{proof}

Similarly, theorem \ref{thmGuaranteeSoft} provides a data-dependent approximation factor for the output of algorithm \ref{softTransAlgo} for the detection of maximal budgeted hypergraph transversals. It also builds on the submodularity and the non-decreasing property of the hyperedge coverage function.

\begin{theorem}\label{thmGuaranteeSoft}
Let $S^P$ be the summary produced by our TC-TranSum algorithm \ref{softTransAlgo} and let $S^*$ be a minimal soft hypergraph transversal, then
\begin{equation}
\underset{i\in S^P}{\sum}\phi_i\leq \underset{i\in S^*}{\sum}\phi_i \left(1+\log\left(\frac{\gamma W}{\gamma W-\underset{e\in \text{inc}(S^{T-1})}{\sum}w_e}\right)\right)
\end{equation}
where $S_1,...,S_T$ represent the consecutive sets of sentences produced by algorithm \ref{softTransAlgo}. 
\end{theorem}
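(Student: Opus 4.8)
The plan is to recognize Algorithm \ref{softTransAlgo} as an instance of the greedy algorithm for the \emph{submodular set cover} problem analyzed by Wolsey \cite{wolsey1982}, and then invoke his approximation guarantee. First I would set up the correspondence: by Theorem \ref{thmSubm}, the hyperedge coverage function $f(S)=\sum_{e\in\text{inc}(S)}w_e$ is submodular and monotonically non-decreasing, and the soft transversal problem asks for a set $S$ of minimal total node weight $\sum_{i\in S}\phi_i$ subject to the coverage constraint $f(S)\geq\gamma W$. This is exactly a weighted submodular covering problem where we wish to reach a target value $\gamma W$ of a submodular function while minimizing the cost $\phi$ of the chosen elements.

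Next I would verify that Algorithm \ref{softTransAlgo} implements precisely Wolsey's greedy rule. The key observation is that the residual quantity $r_i$ maintained by the algorithm equals the \emph{marginal gain per unit cost}
\begin{equation}
r_i=\frac{f(S\cup\{i\})-f(S)}{\phi_i}=\frac{1}{\phi_i}\sum_{e\in\text{inc}(i)\setminus\text{inc}(S)}w_e,
\end{equation}
so that selecting $s^*=\text{argmax}_{i}\,r_i$ and then updating each $r_i$ by subtracting $\frac{1}{\phi_i}\sum_{e\in\text{inc}(s^*)\cap\text{inc}(i)}w_e$ correctly maintains these marginal-gain-per-cost ratios as $S$ grows. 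The increment $\phi_{s^*}r_{s^*}$ added to $\tilde{W}$ is then exactly the marginal coverage gain $f(S\cup\{s^*\})-f(S)$, confirming that $\tilde{W}=f(S)$ is tracked faithfully and the loop terminates as soon as the target coverage $\gamma W$ is met.

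With the correspondence established, I would apply the approximation theorem from \cite{wolsey1982}. Wolsey's bound states that the greedy solution has cost at most $\left(1+\log\frac{f(V)-f(\emptyset)}{f(V)-f(S^{T-1})}\right)$ times the optimum cost, where $S^{T-1}$ is the greedy set just before the final step. In our setting the coverage target is $\gamma W$ rather than the full value $f(V)=W$, so the relevant ``remaining gap'' before the last iteration is $\gamma W-f(S^{T-1})=\gamma W-\sum_{e\in\text{inc}(S^{T-1})}w_e$, and the total achievable increment relative to the target is $\gamma W$. Substituting these into the generic bound yields exactly
\begin{equation}
\sum_{i\in S^P}\phi_i\leq\Big(\sum_{i\in S^*}\phi_i\Big)\left(1+\log\frac{\gamma W}{\gamma W-\sum_{e\in\text{inc}(S^{T-1})}w_e}\right),
\end{equation}
which is the claimed inequality.

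The main obstacle I anticipate is \emph{not} the invocation of Wolsey's theorem but the bookkeeping needed to state it correctly for a \emph{truncated} target $\gamma W$ instead of the full function value. The cleanest way to handle this is to define a truncated coverage function $g(S)=\min\{f(S),\gamma W\}$, which remains submodular and non-decreasing and whose maximum value $g(V)=\gamma W$ is reached exactly when the constraint is satisfied; applying Wolsey's bound to $g$ then produces the stated logarithmic factor automatically. I would also need to argue carefully that the greedy choice on $g$ coincides with the greedy choice on $f$ up until the penultimate step $T-1$ (since below the threshold the two functions have identical marginals), so that $S^{T-1}$ is the same set in both formulations and the denominator $\gamma W-\sum_{e\in\text{inc}(S^{T-1})}w_e$ is strictly positive, keeping the logarithm well-defined.
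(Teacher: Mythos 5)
Your proposal follows essentially the same route as the paper's proof: both reduce the problem to Wolsey's submodular set-covering guarantee via the truncated function $g(S)=\min\bigl(\gamma W,\sum_{e\in\text{inc}(S)}w_e\bigr)$ and then observe that Algorithm \ref{softTransAlgo} implements the greedy marginal-gain-per-cost iterations. Your additional verification that the residuals $r_i$ faithfully track the marginal gains, and that the greedy choices on $f$ and $g$ coincide before the threshold is reached, only makes explicit details the paper leaves implicit.
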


\begin{proof}
Consider the function $g(S)=\min(\gamma W,\underset{e\in\text{inc}(S)}{\sum}w_e)$. Then the problem of finding a minimal soft hypergraph transversal can be reformulated as
\begin{equation}
S^*=\underset{S\subseteq V}{\text{argmin}} \underset{s\in S}{\sum}\phi_s\text{ s.t. }g(S)\geq g(V)
\end{equation}
As $g$ is submodular and monotonically non-decreasing, theorem 1 in \cite{wolsey1982} shows that the summary $S^G$ produced by iteratively growing a set $S_t$ of sentences such that 
\begin{equation}\label{iterativeAlgosoft}
S_{t+1}=S_t\cup\left\lbrace \underset{s\in V\setminus S_t}{\text{argmax}}\left\lbrace\frac{f(S\cup\{s\})-f(S)}{\phi_s}\right\rbrace\right\rbrace
\end{equation}
produces a summary $S^G$ satisfying
\begin{equation}
\underset{i\in S^G}{\sum}\phi_i\leq \underset{i\in S^*}{\sum}\phi_i \left(1+\log\left(\frac{g(V)}{g(V)-g(S^{T-1})}\right)\right).
\end{equation}
which can be rewritten as
\begin{equation}
\underset{i\in S^G}{\sum}\phi_i\leq \underset{i\in S^*}{\sum}\phi_i \left(1+\log\left(\frac{\gamma W}{\gamma W-\underset{e\in \text{inc}(S^{T-1})}{\sum}w_e}\right)\right).
\end{equation}
As algorithm \ref{softTransAlgo} implements the iterations expressed by equation \ref{iterativeAlgosoft}, the summary $S^S$ produced by our algorithm \ref{softTransAlgo} satisfies the same inequality.
\end{proof}

In practice, the result of theorem \ref{thmGuaranteeSoft} suggests that the quality of the output depends on the relative increase in the hyperedge coverage induced by the last sentence to be appended to the summary. In particular, if each sentence that is appended to the summary in the interations of algorithm \ref{softTransAlgo} covers a sufficient number of new themes that are not covered already by the summary, the approximation factor is low.

\subsection{Complexity analysis}
We analyse the worst case time complexity of each step of our method. The time complexity of DBSCAN algorithm \cite{dbscan} is $O(N_t\log(N_t))$. Hence, the theme detection algorithm \ref{SEMCOTAlgo} takes $O(N_cN_t\log(N_t))$ steps, where $N_c$ is the number of iterations of algorithm \ref{SEMCOTAlgo} which is generally low compared to the number of terms. The time complexity for the hypergraph construction is $O(K(N_s+N_t))$ where $K$ is the number of topics, or $O(N_t^2)$ if $N_t\geq N_s$. The time complexity of the sentence selection algorithms \ref{budgTransAlgo} and \ref{softTransAlgo} are bounded by $O(N_sKC^{\max}L^{\max})$ where $C^{\max}$ is the number of sentences in the largest theme and $L^{\max}$ is the length of the longest sentences. Assuming $N_t$ is larger than $N_s$, the overall time complexity of the method is of $O(N_t^2)$ steps in the worst case. Hence the method is essentially equivalent to early graph-based models for text summarization in terms of computational burden, such as the LexRank-based systems \cite{lexrank,R17} or greedy approaches based on global optimization \cite{shen2010,bilmes2010,wenpeng2015}. However, it is computationnally more efficient than traditional hypergraph-based summarizers such as the one in \cite{herf} which involves a Markov Chain Monte Carlo inference for its topic model or the one in \cite{hypersum} which is based on an iterative computation of scores involving costly matrix multiplications at each step.

\section{Experiments and evaluation}\label{experimentSection}
We present experimental results obtained with a Python implementation of algorithms \ref{budgTransAlgo} and \ref{softTransAlgo} on a standard computer with a $2.5GHz$ processor and a 8GB memory.

\subsection{Dataset and metrics for evaluation}
We test our algorithms on DUC2005 \cite{duc2005}, DUC2006 \cite{duc2006} and DUC2007 \cite{duc2007} datasets which were produced by the Document Understanding Conference (DUC) and are widely used as benchmark datasets for the evaluation of query-oriented summarizers. The datasets consist respectively of $50$, $50$ and $45$ corpora, each consisting of $25$ documents of approximately $1000$ words, on average. A query is associated to each corpus. For evaluation purposes, each corpus is associated with a set of query-relevant summaries written by humans called \textit{reference summaries}. In each of our experiments, a candidate summary is produced for each corpus by one of our algorithms and it is compared with the reference summaries using the metrics described below. Moreover, in experiments involving algorithm \ref{budgTransAlgo}, the target summary length is set to $250$ words as required in DUC evalutions.

In order to evaluate the similarity of a candidate summary with a set of reference summaries, we make use of the ROUGE toolkit of \cite{rouge}, and more specifically of ROUGE-2 and ROUGE-SU4 metrics, which were adopted by DUC for summary evaluation. ROUGE-2 measures the number of bigrams found both in the candidate summary and the set of reference summaries. ROUGE-SU4 extends this approach by counting the number of unigrams and the number of 4-skip-bigrams appearing in the candidate and the reference summaries, where a 4-skip-bigram is a pair of words that are separated by no more than 4 words in a text. We refer to ROUGE toolkit \cite{rouge} for more details about the evaluation metrics. ROUGE-2 and ROUGE-SU4 metrics are computed following the same setting as in DUC evaluations, namely with word stemming and jackknife resampling but without stopword removal.

\subsection{Parameter tuning}
Besides the parameters of SEMCOT algorithm for which empirical values were given in section \ref{themeSubsection}, there are three parameters of our system that need to be tuned: parameters $\mu$ (threshold on isf value to include a noisy term as a single topic in SEMCOT), $\delta$ (threshold on the topic score for tagging a sentence with a given topic) and $\lambda$ (balance between the query relevance and the centrality in hyperedge weights). The values of all three parameters are determined by an alternating maximization strategy of ROUGE-SU4 score in which the values of two parameters are fixed and the value of the third parameter is tuned to maximize the ROUGE-SU4 score produced by algorithm \ref{budgTransAlgo} with a target summary length of $250$ words, in an iterative fashion. The ROUGE-SU4 scores are evaluated by cross-validation using a leave-one-out process on a validation dataset consisting of $70\%$ of DUC2007 dataset, which yields $\mu=1.98$, $\delta=0.85$ and $\lambda=0.4$.

Additionally, we display the evolution of ROUGE-SU4 and ROUGE-2 scores as a function of $\delta$ and $\lambda$. For parameter $\delta$, we observe in graphs \ref{deltaRouge2} and \ref{deltaRougeSU4} that the quality of the summary is low for $\delta$ close to $0$ since it encourages our theme detection algorithm to tag the sentences with irrelevant topics with low associated tfisf values. In contrast, when $\delta$ exceeds $0.9$, some relevant topics are overlooked and the quality of the summaries drops severely. Regarding parameter $\lambda$, we observe in graphs \ref{lambdaRouge2} and \ref{lambdaRougeSU4} that $\lambda=0.4$ yields the highest score since it combines both the relevance of themes to the query and their centrality within the corpus for the computation of hyperedge weights. In contrast, with $\lambda=1$, the algorithm focuses on the lexical similarity of themes with the query but it neglects the prominence of each theme. 

\begin{figure}[!h]
\centering
\subfigure{
    \includegraphics[width=.45\textwidth]{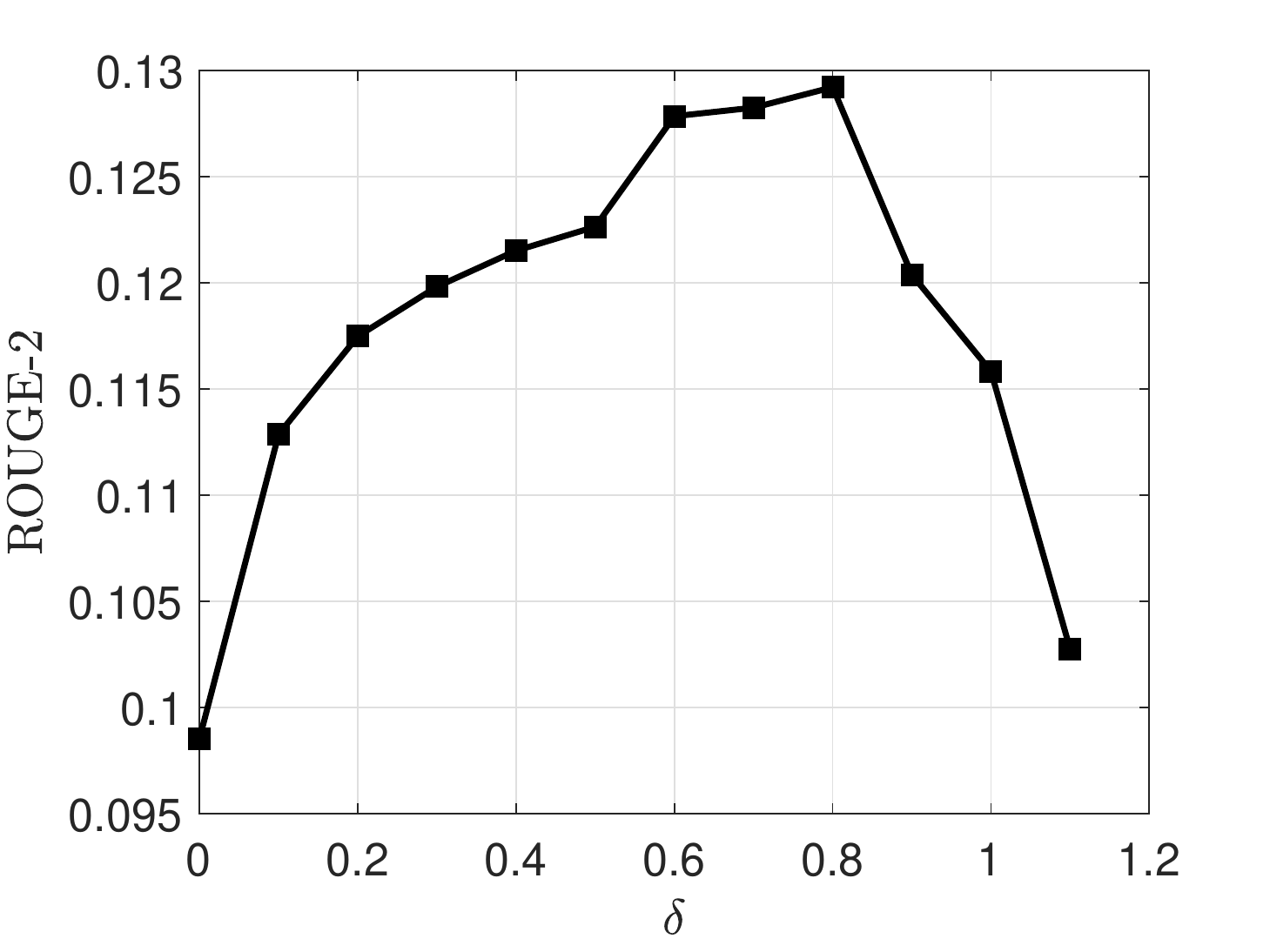}
    \label{deltaRouge2}
}
\subfigure{
	\includegraphics[width=.45\textwidth]{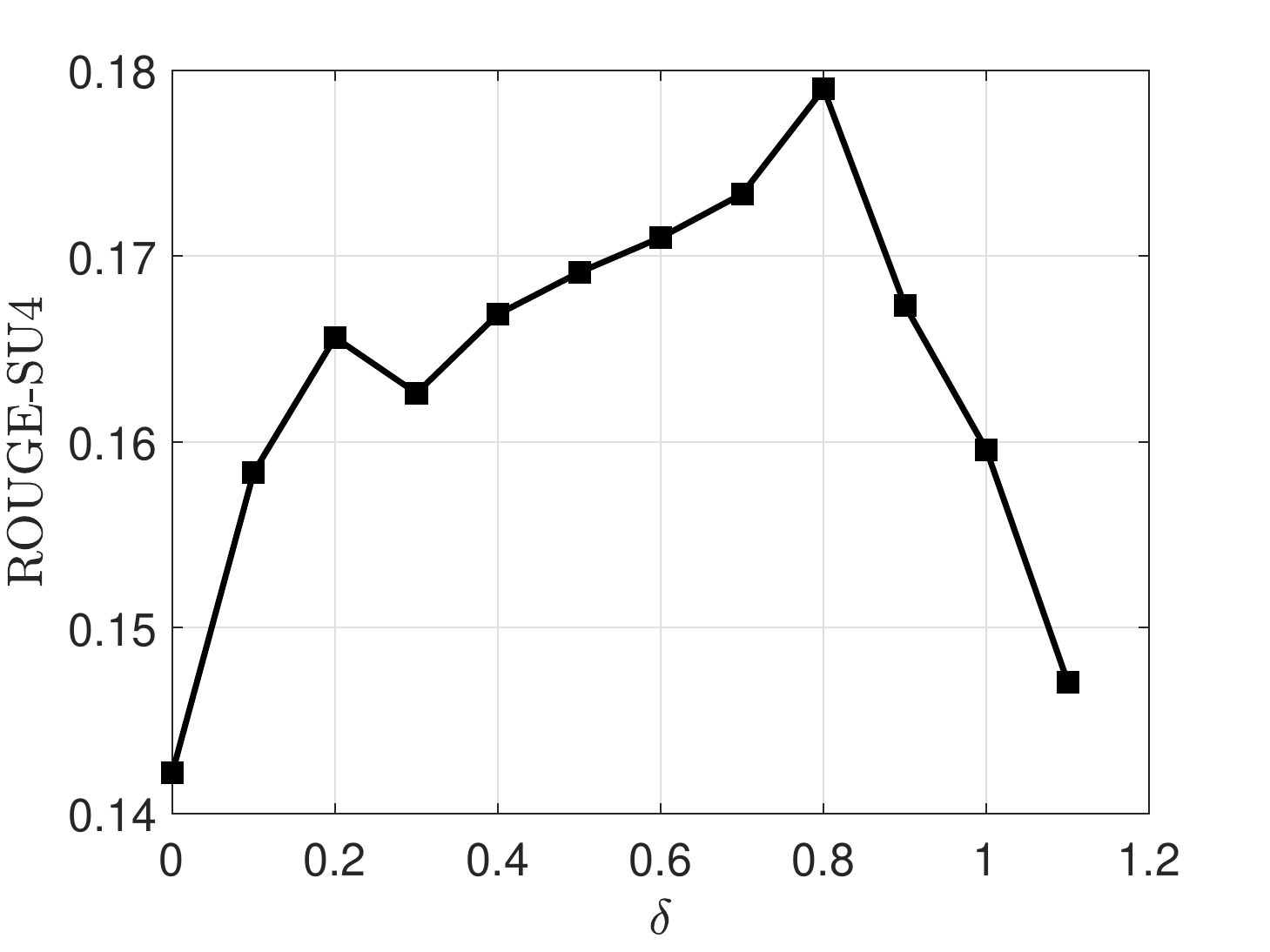}
	\label{deltaRougeSU4}
}
\caption{ROUGE-2 and ROUGE-SU4 as a function of $\delta$ for $\lambda=0.4$ and $\mu=1.98$.}
\label{deltaGraphs}
\end{figure}

\begin{figure}[!h]
\centering
\subfigure{
    \includegraphics[width=.45\textwidth]{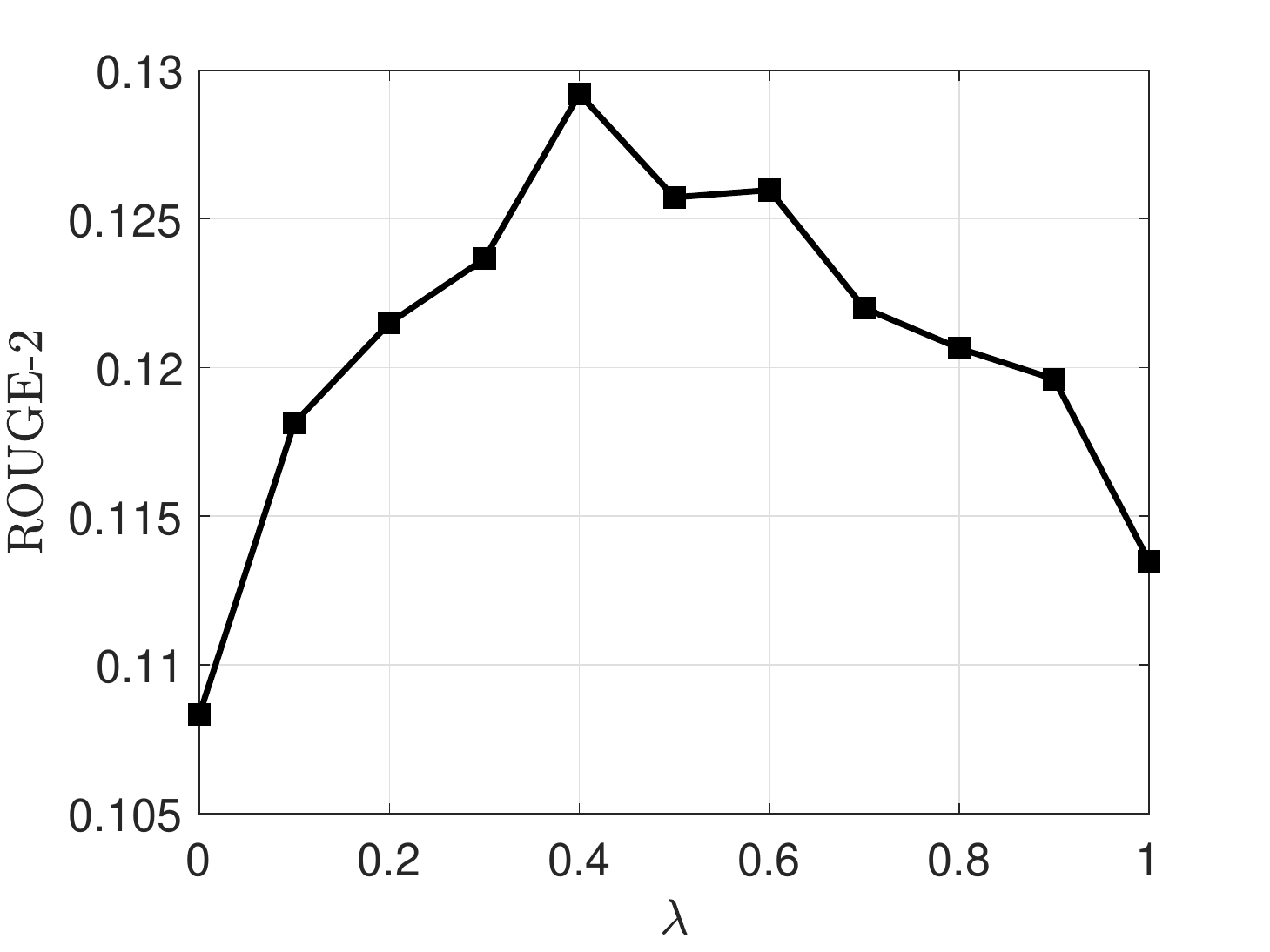}
    \label{lambdaRouge2}
}
\subfigure{
	\includegraphics[width=.45\textwidth]{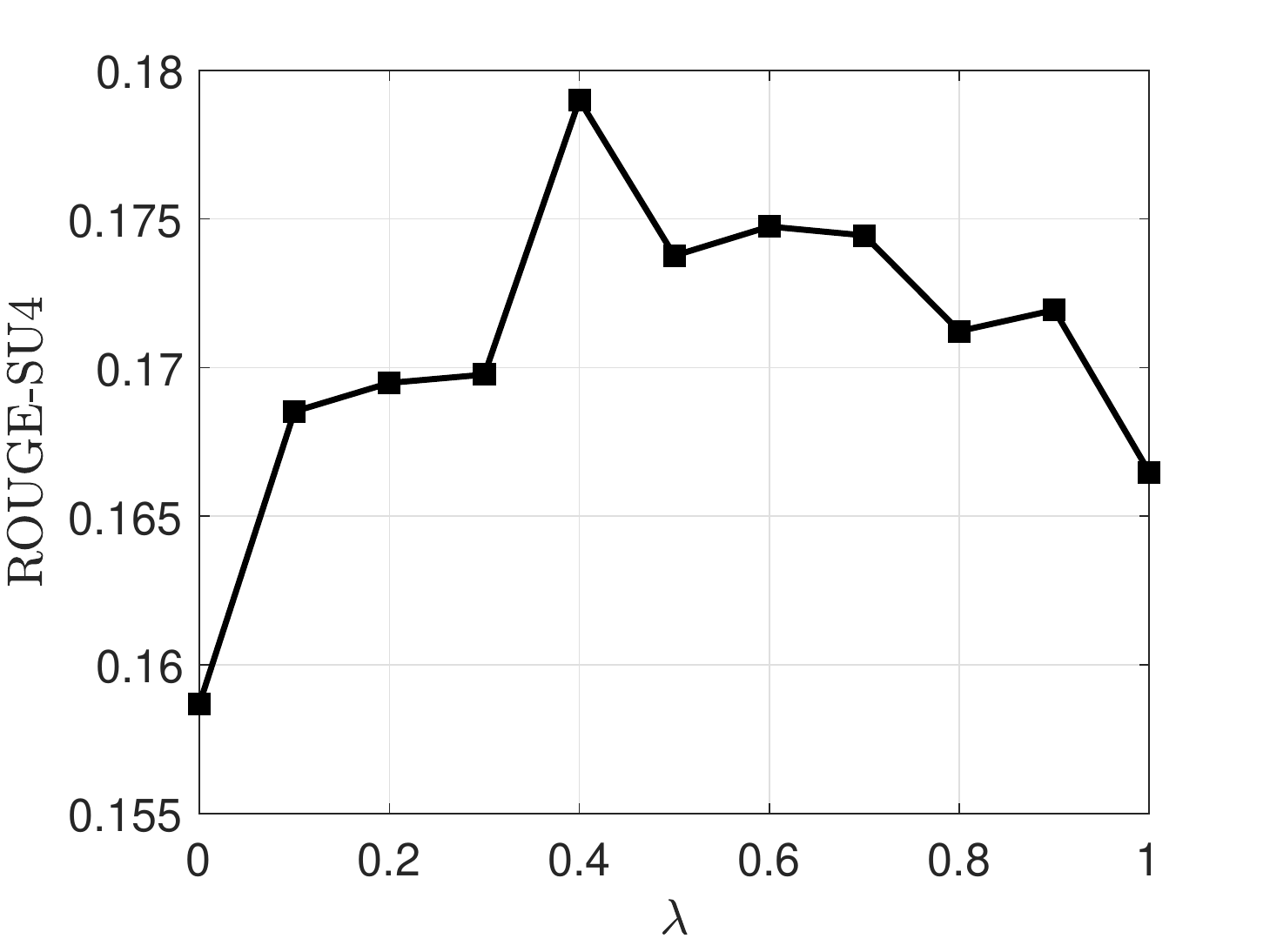}
	\label{lambdaRougeSU4}
}
\caption{ROUGE-2 and ROUGE-SU4 as a function of $\lambda$ for $\delta=0.85$ and $\mu=1.98$.}
\label{lambdaGraphs}
\end{figure}

\subsection{Testing the TC-TranSum algorithm}
In order to test our soft transversal-based summarizer, we display the evolution of the summary length and the ROUGE-SU4 score as a function of parameter $\gamma$ of algorithm \ref{softTransAlgo}. In figure \ref{gammaLENGTH}, we observe that the summary length grows linearly with the value of parameter $\gamma$ which confirms that our system does not favor longer sentences for low values of $\gamma$. The ROUGE-SU4 curve of figure \ref{gammaRSU} has a concave shape, with a low score when $\gamma$ is close to $0$ (due to a poor recall) or when $\gamma$ is close to $1$ (due to a poor precision). The overall concave shape of the ROUGE-SU4 curve also demonstrates the efficiency of our TC-TranSum algorithm: based on our hyperedge weighting scheme and our hyperedge coverage function, highly relevant sentences inducing a significant increase in the ROUGE-SU4 score are identified and included first in the summary.

In the subsequent experiments, we focus on TL-TranSum algorithm \ref{budgTransAlgo} which includes a target summary length and can thus be compared with other summarization systems which generally include a length constraint.

\begin{figure}[!h]
\centering
\subfigure{
    \includegraphics[width=.45\textwidth]{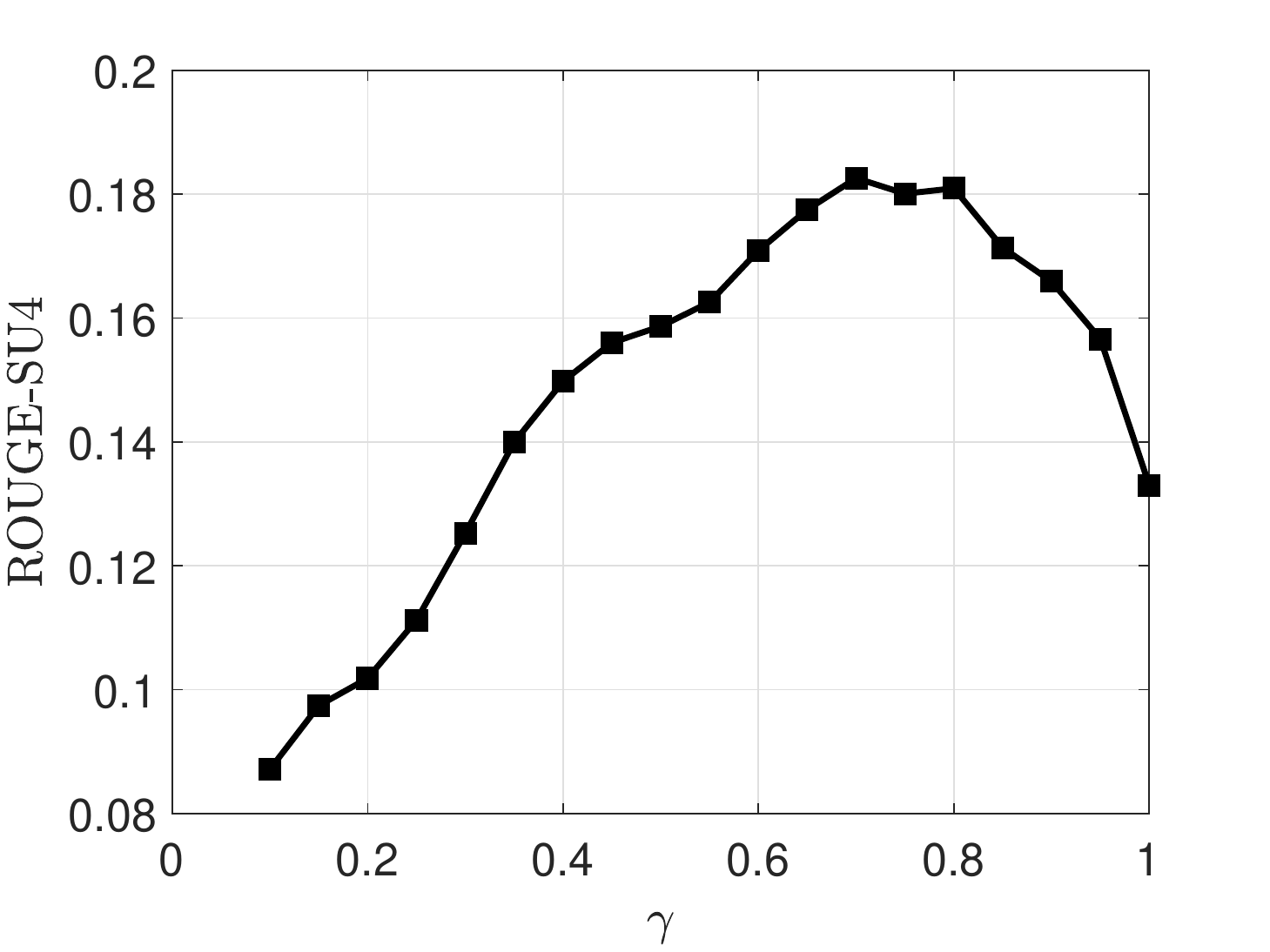}
    \label{gammaRSU}
}
\subfigure{
	\includegraphics[width=.45\textwidth]{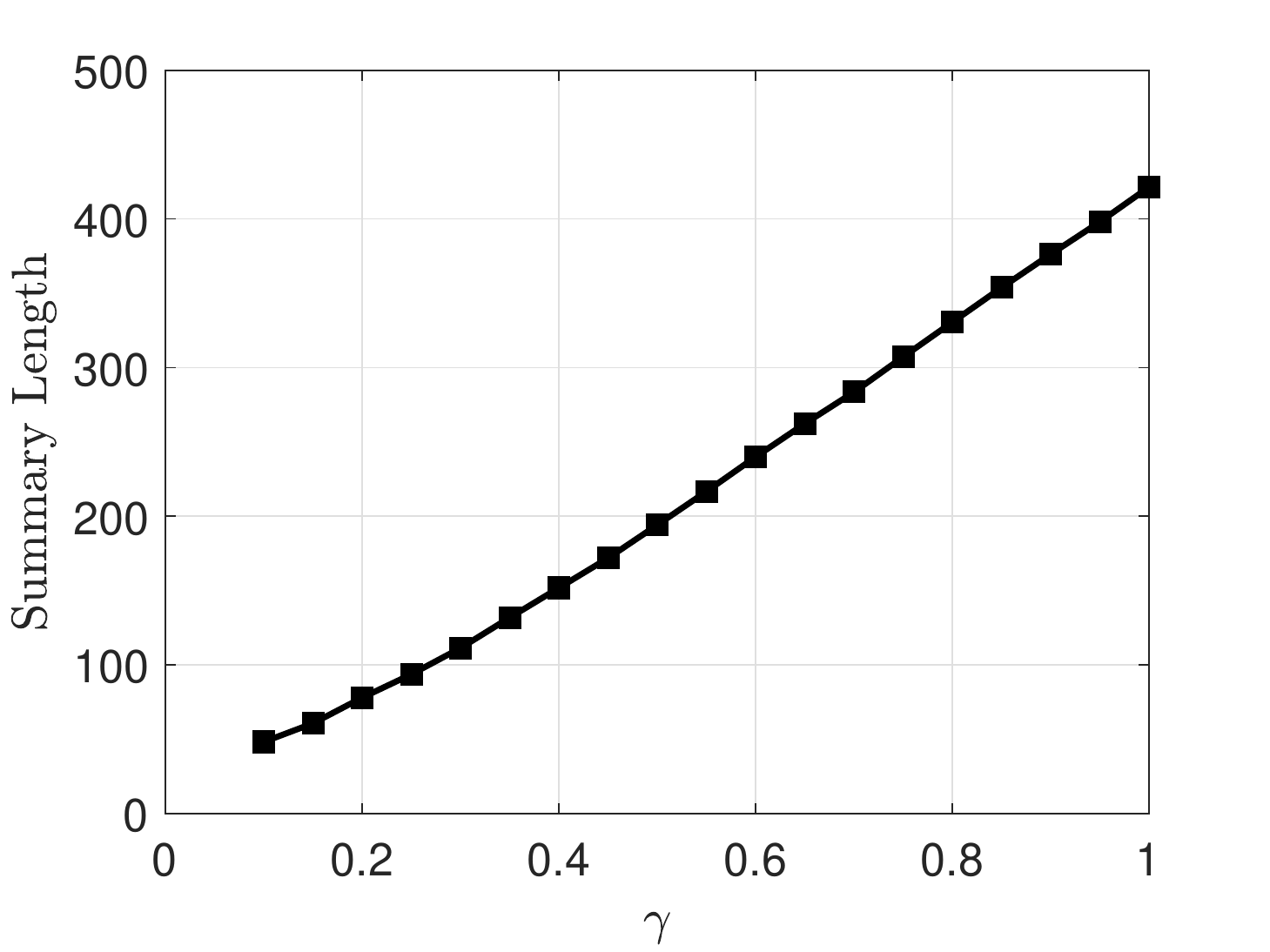}
	\label{gammaLENGTH}
}
\caption{Evolution of the ROUGE-SU4 score (left) and the summary length (right) as a function of the coverage parameter $\gamma$ of TC-TranSum algorithm \ref{softTransAlgo}.}
\label{otherAlgoGraphs}
\end{figure}

\subsection{Testing the hypergraph structure}\label{alternativeHypergraph}
To justify our theme-based hypergraph definition, we test other hypergraph models. We only change the hyperedge model which determines the kind of relationship between sentences that is captured by the hypergraph. The sentence selection is performed by applying algorithm \ref{budgTransAlgo} to the resulting hypergraph. We test three alternative hyperedge models. First a model based on \textit{agglomerative} clustering instead of SEMCOT: the same definition of semantic dissimilarity (equation \ref{semDistEqn}) is used, then topics are detected as clusters of terms obtained by agglomerative clustering with single linkage with the semantic dissimilarity as a distance measure. The themes are detected and the hypergraph is constructed in the same way as in our model. Second, \textit{Overlap} model defines hyperedges as overlapping clusters of sentences obtained by applying an algorithm of overlapping cluster detection \cite{lancichinetti2009} and using the cosine distance between tfisf representations of sentences as a distance metric. Finally, we test a hypergraph model already proposed in HyperSum system by \cite{hypersum} which combines pairwise hyperedges joining any two sentences having terms in common and hyperedges formed by non-overlapping clusters of sentences obtained by DBSCAN algorithm. Table \ref{tabComparisonGraphHyperModels} displays the ROUGE-2 and ROUGE-SU4 scores and the corresponding $95\%$ confidence intervals for each model. We observe that our model outperforms both \textit{HyperSum} and \textit{Overlap} models by at least $4\%$ and $15\%$ of ROUGE-SU4 score, respectively, which confirms that a two-step process extracting consistent topics first and then defining theme-based hyperedges from topic tags outperforms approaches based on sentence clustering, even when these clusters do overlap. Our model also outperforms the \textit{Agglomerative} model by $10\%$ of ROUGE-SU4 score, due to its ability to identify noisy terms and to detect the number of topics automatically.

\begin{table}[!h]
\begin{center}
\resizebox{\textwidth}{!}{\fontsize{7}{5.5}\selectfont
\begin{tabular}{c c c}
   \hline
   \rule{0pt}{2ex} System & ROUGE-2 & ROUGE-SU4 \\
   \hline
   
\textbf{TL-TranSum}	& $\mathbf{0.12997} \mathbf{(0.12548-0.13446)}$ & $\mathbf{0.17995} \mathbf{(0.17612-0.18377)}$ \\
Agglomerative & $0.12334 (0.11673-0.12994)$ &  $0.16292 (0.15302-0.17282)$\\
Overlap & $0.11831 (0.11334-0.12328)$ & $0.15640 (0.14762-0.16518)$ \\
HyperSum & $0.12317 (0.11743-0.12892)$ & $0.17231 (0.16561-0.17900)$ \\
\hline
\end{tabular}}
\end{center}
\caption{ROUGE-2 and ROUGE-SU4 scores for our TL-TranSum system compared to three other hypergraph models.}
\label{tabComparisonGraphHyperModels}
\end{table}
\FloatBarrier

\subsection{Comparison with related systems}
We compare the performance of our TL-TranSum algorithm \ref{budgTransAlgo} with that of five related summarization systems. \textit{Topic-sensitive LexRank} of \cite{R17} (TS-LexRank) and \textit{HITS} algorithms of \cite{hits} are early graph-based summarizers. TS-LexRank builds a sentence graph based on term co-occurrences in sentences, and it applies a query-biased PageRank algorithm for sentence scoring. HITS method additionally extracts clusters of sentences and it applies the hubs and authorities algorithm for sentence scoring, with the sentences as authorities and the clusters as hubs. As suggested in \cite{hypersum}, in order to extract query relevant sentences, only the top $5\%$ of sentences that are most relevant to the query are considered. \textit{HyperSum} extends early graph-based summarizers by defining a cluster-based hypergraph with the sentences as nodes and hyperedges as sentence clusters, as described in section \ref{alternativeHypergraph}. The sentences are then scored using an iterative label propagation algorithm over the hypergraph, starting with the lexical similarity of each sentence with the query as initial labels. In all three methods, the sentences with highest scores and pairwise lexical similarity not exceeding a threshold are included in the summary. Finally, we test two methods that also build on the theory of submodular functions. First, the \textit{MaxCover} approach \cite{takamura2009} seeks a summary by maximizing the number of distinct relevant terms appearing in the summary while not exceeding the target summary length (using equation \ref{eqnHyperWeights} to compute the term relevance scores). While the objective function of the method is similar to that of the problem of finding a maximal budgeted hypergraph transversal (equation \ref{defBudgTransv}) of \cite{wenpeng2015}, they overlook the semantic similarities between terms which are captured by our SEMCOT algorithm and our hypergraph model. Similarly, the \textit{Maximal Relevance Minimal Redundancy} (MRMR) first computes relevance scores of sentences as in equation \ref{eqnHyperWeights}, then it seeks a summary with a maximal total relevance score and a minimal redundancy while not exceeding the target summary length. The problem is solved by an iterative algorithm building on the submodularity and non-decreasing property of the objective function. 

Table \ref{tabComparisonGraphModels} displays the ROUGE-2 and ROUGE-SU4 scores with the corresponding $95\%$ confidence intervals for all six systems, including our TL-TranSum method. We observe that our system outperforms other graph and hypergraph-based summarizers involving the computation of individual sentence scores: LexRank by $6\%$, HITS by $13\%$ and HyperSum by $6\%$ of ROUGE-SU4 score; which confirms both the relevance of our theme-based hypergraph model and the capacity of our transversal-based summarizer to identify jointly relevant sentences as opposed to methods based on the computation of individual sentence scores. Moreover, our TL-TranSum method also outperforms other approaches such as MaxCover ($5\%$) and MRMR ($7\%$). These methods are also based on a submodular and non-decreasing function expressing the information coverage of the summary, but they are limited to lexical similarities between sentences and fail to detect topics and themes to measure the information coverage of the summary.

\begin{table}[!h]
\begin{center}
\resizebox{\textwidth}{!}{\fontsize{7}{5.5}\selectfont
\begin{tabular}{c c c}
   \hline
   \rule{0pt}{2ex} System & ROUGE-2 & ROUGE-SU4 \\
   \hline
   
\textbf{TL-TranSum}	& $\mathbf{0.12997} \mathbf{(0.12548-0.13446)}$ & $\mathbf{0.17995} \mathbf{(0.17612-0.18377)}$ \\
TS-LexRank	&  $0.11037 (0.10263-0.11811)$ &  $0.16939 (0.16233-0.17645)$\\
HITS & $0.10972 (0.10155-0.11789)$ & $0.15927 (0.15251-0.16603)$ \\
HyperSum &	 $0.11994 (0.11298-0.12690)$ & $0.16993 (0.16189-0.17797)$ \\
MaxCover	& $0.11985 (0.11028-0.12943)$ & $0.17072 (0.16155-0.17988)$ \\
MRMR & $0.11840 (0.10999-0.12681)$ & $0.16857 (0.16046-0.17668)$ \\

\hline
\end{tabular}}
\end{center}
\caption{Comparison with related graph- and hypergraph-based summarization systems.}
\label{tabComparisonGraphModels}
\end{table}
\FloatBarrier

\subsection{Comparison with DUC systems}

As a final experiment, we compare our TL-TranSum approach to other summarizers presented at DUC contests. Table \ref{tabComparDUC2007} displays the ROUGE-2 and ROUGE-SU4 scores for the worst summary produced by a human, for the top four systems submitted for the contests, for the baseline proposed by NIST (a summary consisting of the leading sentences of randomly selected documents) and the average score of all methods submitted, respectively for DUC2005, DUC2006 and DUC2007 contests. Regarding DUC2007, our method outperforms the best system by $2\%$ and the average ROUGE-SU4 score by $21\%$. It also performs significantly better than the baseline of NIST. However, it is outperformed by the human summarizer since our systems produces extracts, while humans naturally reformulate the original sentences to compress their content and produce more informative summaries. Tests on DUC2006 dataset lead to similar conclusions, with our TL-TranSum algorithm outperforming the best other system and the average ROUGE-SU4 score by $2\%$ and $22\%$, respectively. On DUC2005 dataset however, our TL-TranSum method is outperformed by the beset system which is due to the use of advanced NLP techniques (such as sentence trimming \cite{duc2005advanced}) which tend to increase the ROUGE-SU4 score. Nevertheless, the ROUGE-SU4 score produced by our TL-TranSum algorithm is still $15\%$ higher than the average score for DUC2005 contest.

\begin{table}[!h]
\begin{center}
\resizebox{\textwidth}{!}{\fontsize{10}{8}\selectfont
\begin{tabular}{c c c c c c c}
   \hline
	\rule{0pt}{2ex}  & \multicolumn{2}{c}{DUC2005} & \multicolumn{2}{c}{DUC2006} & \multicolumn{2}{c}{DUC2007}\\  
   \hline
   \rule{0pt}{2ex} Method & ROUGE-2& ROUGE-SU4 & ROUGE-2& ROUGE-SU4 & ROUGE-2& ROUGE-SU4\\
   \hline
	\rule{0pt}{2ex} Hum & $0.0897$ & $0.151$ & $0.13260$ & $0.18385$ & $0.17528$ & $0.21892$\\
	\rule{0pt}{2ex} \textbf{TL-TranSum} & $\mathbf{0.077392}$ & $0.12869$& $\mathbf{0.10779}$ & $\mathbf{0.15854}$& $\mathbf{0.12997}$ & $\mathbf{0.17995}$\\
1st & $0.07251$ & $\mathbf{0.13163}$ & $0.09558$ & $0.15529$ & $0.12448$ & $0.17711$\\
2nd & $0.07174$ & $0.12972$ & $0.09097$ & $0.14733$ & $0.12028$ & $0.17074$\\
3rd & $0.06984$ & $0.12525$ & $0.08987$ & $0.14755$ & $0.11887$& $0.16999$\\
4th & $0.06963$ & $0.12795$ & $0.08954$ & $0.14607$ & $0.11793$ & $0.17593$\\
Syst. Av.& $0.05842$ & $0.11205$ & $0.07463$ & $0.13021$ & $0.09597$&$0.14884$\\
Basel. & $0.04026$ & $0.08716$ & $0.04947$ & $0.09788$ & $0.06039$&$0.10507$\\
\hline
\end{tabular}}
\end{center}
\caption{Comparison with DUC2005, DUC2006 and DUC2007 systems}
\label{tabComparDUC2007}
\end{table}
\FloatBarrier

\section{Conclusion}\label{concludingSection}
In this paper, a new hypergraph-based summarization model was proposed, in which the nodes are the sentences of the corpus and the hyperedges are themes grouping sentences covering the same topics. Going beyond existing methods based on simple graphs and pairwise lexical similarities, our hypergraph model captures groups of semantically related sentences. Moreover, two new method of sentence selection based on the detection of hypergraph transversals were proposed: one to generate summaries of minimal length and achieving a target coverage, and the other to generate a summary achieving a maximal coverage of relevant themes while not exceeding a target length. The approach generates informative summaries by extracting a subset of sentences jointly covering the relevant themes of the corpus. Experiments on a real-world dataset demonstrate the effectiveness of the approach. The hypergraph model itself is shown to produce more accurate summaries than other models based on term or sentence clustering. The overall system also outperforms related graph- or hypergraph-based approaches by at least $10\%$ of ROUGE-SU4 score.

As a future research direction, we may analyse the performance of other algorithms for the detection of hypergraph transversals, such as methods based on LP relaxations. We may also further extend our topic model to take the polysemy of terms into acount: since each term may carry multiple meanings, a given term could refer to different topics depending on its context. Finally, we intend to adapt our model for solving related problems, such as commmunity question answering.

\section*{References}

\end{document}